\def\forarxiv{0}
\def\foraistats{1} 
\def\forneurips{0} 
\crefname{assumption}{assumption}{assumptions}
\newtheorem{theorem}{Theorem}
\newtheorem{lemma}{Lemma}
\newtheorem{remark}{Remark }
\newtheorem{definition}{Definition}
\newtheorem{assumption}{Assumption}
\newtheorem{proposition}{Proposition}
\newcommand{\fitellipsis}[2] %
{\draw let \p1=(#1), \p2=(#2), \n1={atan2(\y2-\y1,\x2-\x1)}, \n2={veclen(\y2-\y1,\x2-\x1)}
    in ($ (\p1)!0.5!(\p2) $) ellipse [x radius=\n2/2+0.8cm, y radius=0.6cm, rotate=\n1];
}
\newcommand{\ts}{\textstyle}
\newcommand\exts[1]{\breve{s}^{[#1]}}
\newcommand{\adv}{\Delta}
\newcommand{\supp}{\rho}
\newcommand{\sparscard}{\vert \supp \vert}
\newcommand{\oracsparscard}{\rho_0}
\newcommand{\threshold}{\tau_0}
\newcommand{\norm}[1]{\lVert#1\rVert}
\newcommand{\lrnorm}[1]{\left\lVert#1\right\rVert}
\newcommand{\covsset}{\mathcal{I}} %
\newcommand{\activeset}{\tilde{\rho}_0}
\newcommand{\largestszerocoords}{\mathcal{T}_0} %
\newcommand{\concentratabilitycoef}{\Lambda_{\min}}
\title{Reward-Relevance-Filtered Linear Offline Reinforcement Learning}
\author{Angela Zhou }
\tikzset{
    -Latex,auto,node distance =1 cm and 1 cm,semithick,
    state/.style ={ellipse, draw, minimum width = 0.7 cm},
    point/.style = {circle, draw, inner sep=0.04cm,fill,node contents={}},
    bidirected/.style={Latex-Latex,dashed},
    el/.style = {inner sep=2pt, align=left, sloped}
}
\begin{document}

\if\forneurips 1
\author{%
  Angela Zhou\\
  Department of Data Sciences and Operations\\
  University of Southern California\\
  \texttt{zhoua@usc.edu} 
}
\fi

\if\foraistats 1
\twocolumn[

\aistatstitle{Reward-Relevance-Filtered Linear Offline Reinforcement Learning}

\aistatsauthor{ Angela Zhou}

\aistatsaddress{ University of Southern California \\ Data Sciences and Operations \\ zhoua@usc.edu } ]
\fi 
\if \forarxiv 1
\maketitle
\fi 
\if \forneurips 1
\title{Reward-Relevance-Filtered Linear Offline Reinforcement Learning}
\maketitle
\fi 
\begin{abstract}
This paper studies offline reinforcement learning with linear function approximation in a setting with decision-theoretic, but not estimation sparsity. The structural restrictions of the data-generating process presume that the transitions factor into a sparse component that affects the reward and could affect additional exogenous dynamics that do not affect the reward. Although the minimally sufficient adjustment set for estimation of full-state transition properties depends on the whole state, the optimal policy and therefore state-action value function depends only on the sparse component: we call this \textit{causal/decision-theoretic sparsity}. We develop a method for \textit{reward-filtering} the estimation of the state-action value function to the sparse component by a modification of thresholded lasso in least-squares policy evaluation. %
We provide theoretical guarantees for our reward-filtered linear fitted-Q-iteration, with sample complexity depending only on the size of the sparse component. 
\end{abstract}
\section{Introduction}

\vspace{-10pt}

\if\forneurips 1
Offline reinforcement learning, learning to make decisions from historical data, is necessary in important application areas such as healthcare, e-commerce, and other real-world domains, where randomized exploration is costly or unavailable. It requires certain assumptions such as full observability and no unobserved confounders. This motivates collecting as much information as possible about the environment. On the other hand, common sensing modalities therefore also capture information about the environment that is unaffected by an agent's actions. Given the overall high variance of learning offline, removing such exogenous information can improve policy information and optimization.  

Though various combinations of relevance/irrelevance are possible for rewards and actions, as has been recognized in recent work, most works methodologically impose statistically difficult conditional independence restrictions with variational autoencoders that lack strong theoretical computational/statistical guarantees. Other approaches suggest simpler variable screening, but without discussion of underlying signal strength assumptions, or tradeoffs in downstream estimation and value under potential false negatives/positives, and without guarantees. To bridge between these methods, we focus on a model with linear function approximation, a popular structural assumption in the theoretical literature, and develop methods based on thresholded LASSO regression, connecting classical statistical results to new decision-theoretic notions of sparsity introduced by these causal decompositions of reward/action ir/relevance. In particular, we focus on a particular decomposition: the transitions factor into a sparse component that affects the reward, with dynamics that can affect the next timestep's sparse component and an exogenous component. The exogenous component does not affect the reward or sparse component. A toy example of such a setting is controlling a boat with an image representation of the state environment: actions affect navigation locally and also propagate ripples leaving the boat. Though these ripples evolve under their own dynamics, they themselves do not affect local control of the boat or rewards. Our structural assumptions, though restrictive, still surface what we call ``decision-theoretic, but not estimation sparsity": that is, the minimally sufficient causal adjustment set to predict transition probabilities requires the full state variable, but the optimal policy only depends on the sparse component. 

The contributions of our work are as follows: under our structural assumptions, we develop methodology for filtering out exogenous states based on support recovery via thresholded lasso regression for the rewards, and linear estimation on the recovered support for the $q$ function via least-squares policy evaluation/fitted-Q-iteration (FQI). We prove predictive error guarantees on the $q$ function estimation, and correspondingly on the optimal policy, showing how the optimal policy now depends on the dimensionality of the sparse component, rather than the full ambient dimension. 
\fi 
\if\foraistats 1
Offline reinforcement learning, learning to make decisions from historical data, is necessary in important application areas such as healthcare, e-commerce, and other real-world domains, where randomized exploration is costly or unavailable. It requires certain assumptions such as full observability and no unobserved confounders. This motivates, especially in the era of big data, collecting as much information as possible about the environment into the state variable. On the other hand, common sensing modalities by default capture not only information that can be affected by an agent's actions, but also information about the environment that is unaffected by an agent's actions. For example, in robotics applications, the dynamics of clouds moving in the sky is a separate process that does not affect, nor is affected by, agents' actions, and does not affect agent reward. Given the overall high variance of learning offline, removing such exogenous information can help improve policy information and optimization, while recovering a minimally sufficient state variable for the optimal policy can reduce vulnerability to distribution shifts.  

Though various combinations of relevance/irrelevance are possible for rewards and actions, as has been recognized in a recent work, most works methodologically impose statistically difficult conditional independence restrictions with variational autoencoders that lack strong theoretical computational/statistical guarantees. Other approaches suggest simpler variable screening, but without discussion of underlying signal strength assumptions, or tradeoffs in downstream estimation and value under potential false negatives/positives, and without guarantees. To bridge between these methods, we focus on a model with linear function approximation, a popular structural assumption in the theoretical literature, and develop methods based on thresholded LASSO regression, connecting classical statistical results to new decision-theoretic notions of sparsity introduced by these causal decompositions of reward/action ir/relevance. In particular, we focus on a particular decomposition: the transitions factor into a sparse component that affects the reward, with dynamics that can affect the next timestep's sparse component and an exogenous component. The exogenous component does not affect the reward or sparse component. A toy example of such a setting is controlling a boat with an image representation of the state environment: actions affect navigation locally and also propagate ripples leaving the boat. Though these ripples evolve under their own dynamics, they themselves do not affect local control of the boat or rewards. Our structural assumptions, though restrictive, still surface what we call ``decision-theoretic, but not estimation sparsity": that is, the minimally sufficient causal adjustment set to predict transition probabilities requires the full state variable, but the optimal policy only depends on the sparse component. 

The contributions of our work are as follows: under our structural assumptions, we develop methodology for filtering out exogenous states based on support recovery via thresholded lasso regression for the rewards, and linear estimation on the recovered support for the $q$ function via least-squares policy evaluation/fitted-Q-iteration (FQI). We prove predictive error guarantees on the $q$ function estimation, and correspondingly on the optimal policy, showing how the optimal policy now depends on the dimensionality of the sparse component, rather than the full ambient dimension. 
\fi 

\vspace{-10pt}
\section{Preliminaries}
We consider a finite-horizon Markov Decision Process on the full-information state space comprised of a tuple $\mathcal M = (\mathcal{S}, \mathcal{A}, r, P, \gamma, T)$ of states, actions, reward function $r(s,a)$ , transition probability matrix $P$, $\gamma<1$ discount factor, and time horizon of $T$ steps, where $t=1, \dots, T$. We let the state spaces $\mathcal{S}\subseteq \mathbb{R}^d$ be continuous, and assume the action space $\mathcal{A}$ is finite: $\phi(s,a)$ denotes a (known) feature map. A policy $\pi: \mathcal{S} \mapsto \Delta(\mathcal{A})$ maps from the state space to a distribution over actions, where $\Delta(\cdot)$ is the set of distributions over $(\cdot)$, and $\pi(a\mid s)$ is the probability of taking action $a$ in state $s$. Since the optimal policy in the Markov decision process is deterministic, we also use $\pi(s) \in \mathcal{A}$ for deterministic policies, to denote the action taken in state $s$. The policy and MDP $\mathcal{M}$ induce a joint distribution $P_\pi$ where $P_\pi(a_t \mid s_{0:t}, a_{0:t-1}) = \pi(a_t\mid s_t)$ and  $P_\pi(s_{t+1} \mid s_{0:t}, a_{0:t}) = P(s_{t+1}\mid a_t, s_t)$, the transition probability. 

The value function is $v^\pi_t(s)= \E_\pi[ \sum_{t'=t}^{T} \gamma^{t'-t} r_{t'} \mid s ]$, where $\E_\pi$ denotes expectation under the joint distribution induced by the MDP $\mathcal{M}$ running policy $\pi.$ The state-action value function, or $q$ function is $q^\pi_t(s)= \E_\pi[ \sum_{t'=t}^{T} \gamma r_{t'} \mid s, a]$. These satisfy the Bellman operator, e.g. ${q^\pi_t(s,a) = r(s,a) + \gamma \E[v_{t+1}^\pi(s_{t+1})\mid s,a].}$ The optimal value and q-functions are $v^*, q^*$ correspond to the optimal policy and optimal action, respectively.
We focus on the offline reinforcement learning setting where we have access to a dataset of $n$ offline trajectories, $\textstyle \mathcal{D}=\{(s_t^{(i)}, a_t^{(i)}, s_{t+1}^{(i)})_{t=1}^{T}\}_{i=1}^n$, where actions were taken according to some behavior policy $\pi_b.$ We assume throughout that the underlying policy was stationary, i.e. offline trajectories (drawn potentially from a series of episodes) that are independent.

\paragraph{Linearity}
Throughout this paper, we focus on linear Markov decision processes. Let the feature mapping be denoted $\phi: \mathcal{S} \times \mathcal{A} \mapsto \mathbb{R}^d$. We assume the reward function and value functions are linear in $\phi$. 

\begin{assumption}[Linear MDP]\label{asn-linear-mdp}
Assume that both the rewards and transitions are linear functions (possibly with different parameters): 
\if\foraistats 1
\begin{align*} r_t(s, a)&=\beta_t \cdot \phi(s, a),\\
q_t^\pi(s, a)&=\theta_t^{\pi} \cdot \phi(s, a),  \\
\quad P_t(\cdot \mid s, a)&=\mu_t \phi(s, a), \forall t
\end{align*} 
\fi
\if\forneurips 1
\begin{align*} r_t(s, a)&=\beta_t \cdot \phi(s, a),\;
q_t^\pi(s, a)=\theta_t^{\pi} \cdot \phi(s, a), \;
\quad P_t(\cdot \mid s, a)=\mu_t \phi(s, a), \forall t
\end{align*} 
\fi
\end{assumption}
The theoretical analysis of reinforcement learning typically assumes that the reward function is known, since noise in rewards leads to lower-order terms in the analysis. However, in our setting, we will leverage \textit{sparsity of the rewards} to consider minimal state space representations (and adaptive model selection) which affect first-order terms in the analysis. 

Linear Bellman completeness is the assumption that for any linear function $f(s, a):=\theta^{\top} \phi(s, a)$, the Bellman operator applied to $f(s, a)$ also returns a linear function with respect to $\phi$. (It is an equivalent assumption but generalizes more directly to potential nonlinear settings). 

\begin{definition}[Linear Bellman Completeness]
    the features $\phi$ satisfy the linear Bellman completeness property if for all $\theta \in \mathbb{R}^d$ and $(s, a, h) \in \mathcal{S} \times \mathcal{A} \times[T]$, there exists $w \in \mathbb{R}^d$ such that:
$$
w^{\top} \phi(s, a)=r(s, a)+\gamma \mathbb{E}_{s^{\prime} \sim P_h(s, a)} \max _{a^{\prime}} \theta^{\top} \phi\left(s^{\prime}, a^{\prime}\right) .
$$
\end{definition}
As $w$ depends on $\theta$, we use the notation $\mathcal{T}_h: \mathbb{R}^d \mapsto \mathbb{R}^d$ to represent such a $w$, i.e., $w:=\mathcal{T}_h(\theta)$ in the above equation.
Note that the above implies that $r(s, a)$ is in the span of $\phi$ (to see this, take $\theta=0$ ). Furthermore, it also implies that $q_h^{\star}(s, a)$ is linear in $\phi$, i.e., there exists $\theta_h^{\star}$ such that $q_h^{\star}(s, a)=\left(\theta_h^{\star}\right)^{\top} \phi(s, a)$. 

We let $\supp \in [ d]$ denote an index set. We use the superscript $(\cdot)^\supp$ to denote subindexing a (random) vector by the index set (since time is the typical subscript), i.e. $s^\supp$ is the subvector of state variable according to dimensions $\supp, s^\supp = \{s_k\}_{k \in \supp}$. We also introduce a new notion of \textit{extension} of a subvector $s^\supp$ to the ambient dimension, i.e. $\exts{\supp} = s_k \text{ if } k \in \supp \text{ and } 0 \text{ otherwise}$, which makes it easier, for example, to state equivalence of generic $q$ functions comparing full-dimensional states vs. the extension of sparse subvectors to the full-dimensional space, denoted $\breve{q}$. 

\vspace{-10pt}
    \section{Related work }
Our work is related to sparse offline reinforcement learning, LASSO regression for variable selection, and approaches for leveraging causal structure in reinforcement learning to remove important information. We describe each of these in turn. 

\textbf{Structure in offline reinforcement learning}.
\citep{hao2021sparse} studies LASSO estimation for fitted-q-evaluation and interation, and also suggests thresholded LASSO. Although we also use thresholded LASSO, our method is quite different because we directly impose the sparsity structure induced by reward-relevance into estimation of the $q$ function, because the optimal policy is sparse. An emerging line of work identifies causal decomposition of state variables into reward-relevant/reward-irrelevant/controllable components (or variations thereof) \citep{dietterich2018discovering,wang2022causal,wang2021task,zhang2020invariant,seitzer2021causal,efroni2021provably}. Methodologically, these works regularize representation learning such as with variational autoencoders towards conditional independence (which generally lacks theoretical guarantees) \citep{dietterich2018discovering,wang2022denoised,seitzer2021causal}, or assume specific structure such as block MDPs with deterministic latent dynamics emitting high-dimensional observations \citep{efroni2021provably}, or require auxiliary non-standard estimation \citep{lamb2022guaranteed}. Our model somewhat resembles the exogenous-endogenous decomposition of \citep{dietterich2018discovering}, but swaps cross-dependence of exogenous and endogenous components: this gives \textit{different} conditional independence restrictions directly admits sparse learning. Overall, the main simplification of our model relative to these is that rewards do not depend on the exogenous component. The most methodologically related work is that of \citep{efroni2022sparsity}, which studies sparse partial controllability in the linear quadratic regulator; although they also use thresholded LASSO, they consider online control under a different quadratic cost, focus on controllability (action-relevance), and consider entrywise regression of matrix entries.

\textbf{Variable selection via LASSO}. 
There is an enormous literature on LASSO. We quickly highlight only a few  works on thresholded LASSO. \citep{meinshausen2009lasso} studies model selection properties of thresholded LASSO under a so-called ``beta-min" condition, i.e. an assumed lower bound on the smallest non-zero coefficient and gives an asymptotic consistency result. \citep{zhou2010thresholded} also studies thresholded LASSO, while \citep{van2011adaptive} studies adaptive and thresholded LASSO. For simplicity, we focus on high-probability guarantees under the stronger beta-min condition. But stronger guarantees on thresholded LASSO can easily be invoked instead of the ones we use here. See \citep{buhlmann2011statistics} as well. 

\if\foraistats 1
In a different context, that of single-timestep causal inference, \citep{shortreed2017outcome} proposes the ``outcome-adaptive" lasso which adds a coefficient penalty to estimation of the propensity score based on the inverse-strength of coefficients of the outcome model, to screen out covariates unrelated to both exposure and outcome. We are broadly inspired by the idea to encourage sparsity in one model (in our setting, the $q$-function) based on sparse estimation of another (the reward function). Note, however, that the outcome-adaptive lasso is not applicable to enforce this specific structure. 
\fi

\textbf{Our work. }Even under our simpler model, leveraging classical results from the sparse regression literature sheds light on different approaches that have already been proposed. For example, \citet{wang2022causal} proposes a variable screening method based on independence testing, which performs better for variable selection than a previous regularization-based method \citep{wang2021task}. %
The improvement of thresholding procedures upon regularized LASSO for support recovery is classically well known \citep{buhlmann2011statistics}. The tighter analysis of thresholded lasso also sheds light on implicit signal strength assumptions and tradeoffs of false positives for downstream policy value. 

\if\foraistats 1
Overall, relative to works on exogenous structure in reinforcement learning via representation learning, we connect to a classical literature on sparse regression with provable guarantees. On the other hand, relative to an extensive literature on LASSO, the reinforcement learning setting imposes different decision-theoretic desiderata, such that the optimal policy is sparse (hence q-function) even when from a pure estimation perspective, estimating the transitions are not. 
\fi

\vspace{-10pt}
\section{Structure}
\vspace{-10pt}

\begin{figure}
    \centering
\begin{tikzpicture}
    \node[state] (s) at (0,0) {$s^\supp_0$};
    \node[state] (x) [above =of s] {$s^{\supp_c}_0$};
    \node[state] (a) [right =of s] {$a_0$};
    \node[state] (r) [below =of a] {$r_0$};
    \node[state] (s1) [right =of a] {$s^{\supp}_1$};
    \node[state] (a1) [right =of s1] {$a_1$};
    \node[state] (r1) [below =of a1] {$r_1$};
    \node[state] (x1) [above =of s1] {$s^{\supp_c}_1$};
    \path (s) edge (r);
    \path (s) edge (a);
    \path (a) edge (s1);
    \path (a) edge[dotted] (x1);
    \path (a) edge (r);
    \path (x) edge (a);
    \path (x) edge (x1);
    \path (s) edge (x1);
        \path (s) edge[bend right=30] (s1);
    \path (s1) edge (a1);
\path (s1) edge (r1);
\path (a1) edge (r1);
    \path (x1) edge (a1);
\end{tikzpicture}
    \caption{Reward-relevant/irrelevant factored dynamics. The dotted line from $a_t$ to $s_{t+1}^{\supp_c}$ indicates the presence or absence is permitted in the model.} 
    \label{fig:reward-irrelevance}
\end{figure}
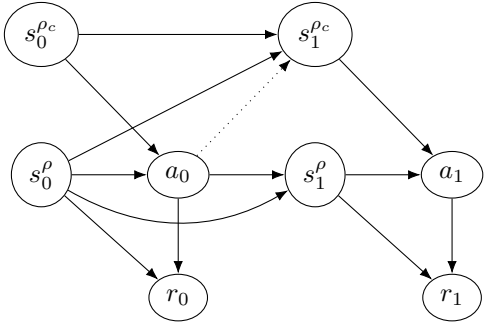

We describe the conditional independence and other restrictions that characterize our filtered reward-relevant model. Let $\supp \subseteq [d]$ denote the supported set of \textit{reward-relevant and endogenous} states. Let $\sparscard$ be the size of the support. 

\begin{assumption}[Blockwise independent design]\label{asn-block-indep}
    $s_{t}^\supp \indep s_{t}^{\supp_c} \mid s_{t-1}, a_{t-1}$
\end{assumption}

\begin{assumption}[Reward-irrelevant decomposition ]\label{asn-reward-relevance}
    Assume that $R(s,a) = R(\tilde{s},a)$ when $s^\supp = \tilde{s}^\supp,$ and that next-time-step endogenous states are independent of prior exogenous states given prior endogenous states and action: 
\begin{equation} s_{t+1}^\supp \indep s_t^{\supp_c} \mid s_t^\supp, a_t \label{eqn-condl-independence} 
\end{equation}
    
\end{assumption}
The conditional independence restriction implies that $
P(s_{t+1}^\supp \mid s_t, a_t) =P(s_{t+1}^\supp \mid s_t^\supp, a_t) $.

Even under these restrictions on the data structure, we can surface a nontrivial qualitative distinction between estimation and decision-making, driven by this causal structure, which we call ``causal sparsity" for short. Although the minimal sufficient adjustment set for estimating the entire-state transitions is the non-sparse union of $s^\rho, s^{\rho_p},$ our next results establish that the optimal decision policy is sparse, and hence our thresholded lasso method depends on the sparse component alone. 

Note that this decomposition differs from the exogenous-endogenous decomposition in \citep{dietterich2018discovering} because our sparse component can affect the exogenous component; but not the other way around -- in our model, the exogenous component does not affect the endogenous component. 

Let $\beta$ be the parameter for the $q$ function, and $\theta$ be the parameter for the reward function. We let $\sigma_r, \sigma_\theta, \sigma_{r + \gamma q}$ denote the subgaussian parameters of the reward-variance, the Bellman-target, and the transitions, respectively.

\paragraph{Interpreting \Cref{asn-reward-relevance}.}
For example, consider linear dynamics (with exogenous noise) in an interacted model, i.e. $s_{t+1}(s,a) = M_a s + \epsilon$ for $M_a \in \mathbb{R}^{d \times d}$. Then $M_a$ is a block matrix and it satisfies \Cref{asn-reward-relevance} if, assuming without loss of generality, that the coordinates are ordered such that the first $\supp$ reward-supported components are first, 
\if\foraistats 1
\begin{align*}&s_{t+1}(s,a) = M_a s + \epsilon, \\
&\text{where } M_a = \begin{bmatrix} M_{a}^{\supp \to \supp } & 0 \\
M_{a}^{\supp \to {\supp_c} } & M_{a}^{{\supp_c} \to {\supp_c} } 
\end{bmatrix}.
\end{align*}
\fi 
\if\forneurips 1
\begin{align*}&s_{t+1}(s,a) = M_a s + \epsilon, \; \text{where } M_a = \begin{bmatrix} M_{a}^{\supp \to \supp } & 0 \\
M_{a}^{\supp \to {\supp_c} } & M_{a}^{{\supp_c} \to {\supp_c} } 
\end{bmatrix}.
\end{align*}
\fi 
In particular, the block matrix $M_{a}^{{\supp_c} \to \supp }=0.$ 

\if\foraistats 1
We can also specify a corresponding probabilistic model. Let $P_a(s_{t+1}\mid s_t)$ denote the $a$-conditioned transition probability, and suppose $P_a(s_{t+1} \mid s_t) \sim N(\mu_a, \Sigma_a),$ and that ${P_a(s_{t+1} \mid s_t)}$ is partitioned (without loss of generality) as $P_a(s_{t+1}^\supp,s_{t+1}^{\supp_c}  \mid s_{t}^\supp,s_{t}^{\supp_c})$. 
Then by \Cref{asn-reward-relevance} $$P_a(s_{t+1}^\supp  \mid s_{t}^\supp) \overset{D}{=} P_a(s_{t+1}^\supp  \mid s_{t}^\supp,s_{t}^{\supp_c}) \sim N(\mu^\supp_a, \Sigma^{\supp,\supp}_a).$$
where the first equality in distribution follows from the conditional independence restriction of \Cref{asn-reward-relevance} and the parameters of the normal distribution follow since marginal distributions of a jointly normal random variable follow by subsetting the mean vector/covariance matrix appropriately.

\begin{remark}
    Similar to previous works studying similar structures, we assume this structure holds. If it may not,
    we could use model selection methods \citep{lee2022oracle}: if we incorrectly assume this structure, we would obtain a completeness violation; so the model selection method's oracle inequalities would apply and be rate-optimal relative to non-sparse approaches. We emphasize that we don't posit this method as a general alternative to general sparsity, but rather as a simple principled approach to estimate in settings with this exogenous structure. 
\end{remark}
\fi 
\vspace{-10pt}
\subsection{Implications for decisions}
We characterize important structural properties under the endogenous-exogenous assumption. Under \Cref{asn-linear-mdp,asn-reward-relevance}, the optimal policy is sparse.

\begin{proposition}[Sparse optimal policies]\label{prop-policy-sparsity}
    When $s^\supp_t = \tilde{s}^\supp_t,$
   $\pi^*_t(s_t) =\tilde{\pi}^*_t(\tilde{s}_t).$
\end{proposition}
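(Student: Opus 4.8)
The plan is to prove a slightly stronger statement by backward induction on $t$: that both the optimal value function $v^*_t$ and the optimal state-action value function $q^*_t(\cdot, a)$ (for each fixed $a$) depend on the state only through its sparse component $s^\supp_t$. Once this is established, the proposition is immediate, since $\pi^*_t(s_t) \in \arg\max_a q^*_t(s_t, a)$ and the maximizing set is then determined entirely by $s^\supp_t$. The two ingredients that drive the induction are exactly the two halves of \Cref{asn-reward-relevance}: reward-irrelevance, $r(s,a) = r(\tilde s, a)$ whenever $s^\supp = \tilde s^\supp$, and the transition consequence $P(s_{t+1}^\supp \mid s_t, a_t) = P(s_{t+1}^\supp \mid s_t^\supp, a_t)$ derived from the conditional independence $s_{t+1}^\supp \indep s_t^{\supp_c} \mid s_t^\supp, a_t$.

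For the base case at $t = T$, I would use that $q^*_T(s, a) = r(s,a)$ and $v^*_T(s) = \max_a r(s,a)$, so that reward-irrelevance alone gives that both depend on the state only through $s^\supp_T$.

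For the inductive step, I would assume $v^*_{t+1}$ depends only on $s^\supp_{t+1}$ and write the Bellman optimality recursion $q^*_t(s_t, a) = r(s_t, a) + \gamma \E[v^*_{t+1}(s_{t+1}) \mid s_t, a]$, treating the two summands separately. The reward term depends only on $s^\supp_t$ by reward-irrelevance. For the expectation term, the inductive hypothesis says the integrand depends on $s_{t+1}$ solely through $s^\supp_{t+1}$, so the expectation integrates only against the marginal transition law of $s^\supp_{t+1}$ given $(s_t, a_t)$; invoking $P(s_{t+1}^\supp \mid s_t, a_t) = P(s_{t+1}^\supp \mid s_t^\supp, a_t)$ then reduces the conditioning set so that this expectation depends on the current state only through $s^\supp_t$. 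Adding the two terms shows $q^*_t(\cdot, a)$, and hence $v^*_t = \max_a q^*_t(\cdot, a)$, depend only on $s^\supp_t$, closing the induction.

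The main obstacle — indeed the only nontrivial point — is keeping the marginalization step clean: I must use the inductive hypothesis to guarantee that only the marginal law of the sparse component $s^\supp_{t+1}$ enters the Bellman backup, and only then apply the conditional-independence implication to drop the dependence on $s^\supp_c$ in the conditioning. A secondary bookkeeping point is tie-breaking in the $\arg\max$: since $q^*_t(s_t, \cdot)$ and $q^*_t(\tilde s_t, \cdot)$ coincide as functions of the action whenever $s^\supp_t = \tilde s^\supp_t$, the set of optimal actions is identical, so under any fixed deterministic tie-breaking rule $\pi^*_t(s_t) = \tilde\pi^*_t(\tilde s_t)$; this same argument simultaneously shows the full optimal policy agrees with the optimal policy of the reduced dynamics on the sparse component.
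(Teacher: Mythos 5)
Your proposal is correct and follows essentially the same route as the paper's proof: backward induction on $t$, with the base case $q^*_T = r_T$ handled by reward-irrelevance, and the inductive step splitting the Bellman backup into the reward term (reward-irrelevance) and the continuation term, where the inductive hypothesis reduces the integrand to a function of $s^\supp_{t+1}$ before the conditional-independence implication $P(s_{t+1}^\supp \mid s_t, a_t) = P(s_{t+1}^\supp \mid s_t^\supp, a_t)$ drops the exogenous coordinates from the conditioning. Your phrasing via $v^*_{t+1}$ rather than $q^*_{t+1}(\cdot,\pi^*_{t+1}(\cdot))$ is an immaterial difference, and your explicit remark on tie-breaking in the $\arg\max$ is a small point the paper leaves implicit.
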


\Cref{prop-policy-sparsity} is the main characterization that motivates our method. Even though the estimation of \textit{transitions} are not sparse, the \textit{optimal} q- and value functions are sparse.

Although well-specification/realizability does not imply Bellman completeness of a function class in general, the reward-sparse linear function class is Bellman-complete for $q$ functions as well. Let $\mathcal{F}^\supp_t$ denote the true sparse function classes $\mathcal{F}^\supp_t = \{ \beta \in \mathbb{R}^d \colon \beta_j = 0, j \in \supp \}.$ 
\begin{proposition}[Reward-sparse function classes are Bellman-complete.]\label{prop-bellman-complete}
Let $r^\supp(s,a)$ be the $\supp$-sparse reward function. Let $\breve{q} \in \breve{\mathcal{Q}}$ be the extension of $\supp$-sparse $q$ functions to the full space, i.e. where $\breve{\mathcal{Q}}$ is the space of functions that are $zero$ outside the support $\supp.$ 

Then: 
$\sup_{\breve{q}_{t+1} \in \breve{\mathcal{Q}}_{t+1}} \inf _{q_t \in \breve{\mathcal{Q}}_t}\left\|q_t-\mathcal{T}_t^{\star} q_{t+1}\right\|_{\mu_t}^2 =0 $
\end{proposition}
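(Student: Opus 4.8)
The plan is to show that the Bellman backup of any reward-sparse continuation function is itself reward-sparse, so that the inner infimum is attained \emph{exactly} (not merely approximately) by the choice $q_t = \mathcal{T}_t^\star \breve{q}_{t+1}$, driving both the infimum and the supremum to zero. Concretely, I would fix an arbitrary $\breve{q}_{t+1} \in \breve{\mathcal{Q}}_{t+1}$ and expand the Bellman target
$$(\mathcal{T}_t^\star \breve{q}_{t+1})(s,a) = r(s,a) + \gamma \E_{s' \sim P_t(\cdot \mid s,a)}\left[ \max_{a'} \breve{q}_{t+1}(s',a') \right],$$
then argue term-by-term that the right-hand side depends on $s$ only through the reward-relevant block $s^\supp$.

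The two terms are handled separately. For the reward term, \Cref{asn-reward-relevance} gives directly that $r(s,a)=r(\tilde s,a)$ whenever $s^\supp=\tilde s^\supp$, i.e.\ $r$ depends on $s$ only through $s^\supp$. For the continuation term, since $\breve{q}_{t+1}\in\breve{\mathcal{Q}}_{t+1}$ is zero outside the support it depends on $s'$ only through $s'^\supp$, so $g(s'^\supp) := \max_{a'} \breve{q}_{t+1}(s',a')$ is a function of $s'^\supp$ alone (the pointwise maximum over the finite action set does not reintroduce dependence on $s'^{\supp_c}$). The crux is the expectation: I would invoke the consequence of \Cref{asn-reward-relevance} already noted in the text, namely $P(s_{t+1}^\supp \mid s_t,a_t)=P(s_{t+1}^\supp \mid s_t^\supp,a_t)$, which says the marginal law of $s'^\supp$ under $P_t(\cdot \mid s,a)$ depends on $(s,a)$ only through $(s^\supp,a)$. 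Hence $\E_{s'\sim P_t(\cdot\mid s,a)}[g(s'^\supp)]$ is a function of $(s^\supp,a)$ only, and combining with the reward term shows $\mathcal{T}_t^\star \breve{q}_{t+1}$ depends on $s$ only through $s^\supp$.

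It remains to certify that this reward-relevant-only function is an actual member of the linear class $\breve{\mathcal{Q}}_t$. Under \Cref{asn-linear-mdp} the Bellman target is linear in $\phi$ for every continuation function, since $P_t(\cdot\mid s,a)=\mu_t\phi(s,a)$ gives $\E_{s'\sim P_t(\cdot\mid s,a)}[g(s'^\supp)] = \langle \int g(s'^\supp)\, d\mu_t(s'),\, \phi(s,a)\rangle$, which together with linearity of $r$ yields $\mathcal{T}_t^\star \breve{q}_{t+1}=w^\top\phi(\cdot,\cdot)$ for some $w$. Because this linear function carries no dependence on $s^{\supp_c}$, its parameter is supported on $\supp$, so $\mathcal{T}_t^\star \breve{q}_{t+1}\in\breve{\mathcal{Q}}_t$. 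Taking $q_t=\mathcal{T}_t^\star \breve{q}_{t+1}$ then makes $\|q_t-\mathcal{T}_t^\star \breve{q}_{t+1}\|_{\mu_t}^2=0$ by exact pointwise equality (hence zero in any $\mu_t$-weighted norm); since $\breve{q}_{t+1}$ was arbitrary, the supremum of the zero infimum over $\breve{q}_{t+1}$ is also zero.

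I expect the main obstacle to be the expectation/conditional-independence step: carefully justifying that integrating a function of $s'^\supp$ against the full kernel $P_t(\cdot\mid s,a)$ reduces to integrating against the reduced kernel $P(\cdot^\supp\mid s^\supp,a)$ is exactly what \Cref{asn-reward-relevance} buys, and it is the only place the causal structure enters. A secondary subtlety is the bookkeeping connecting ``the function depends only on $s^\supp$'' with ``the linear parameter vanishes off $\supp$,'' which depends on how $\breve{\mathcal{Q}}_t$ and the feature map $\phi$ are identified; I would make this correspondence explicit so the membership claim $\mathcal{T}_t^\star \breve{q}_{t+1}\in\breve{\mathcal{Q}}_t$ is airtight.
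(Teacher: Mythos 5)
Your proposal is correct and follows essentially the same route as the paper's proof: both show that $\mathcal{T}_t^\star \breve{q}_{t+1}$ lies exactly in $\breve{\mathcal{Q}}_t$ by (i) noting the max over actions of a $\supp$-sparse function depends only on $s^\supp$, (ii) invoking the conditional independence of \Cref{asn-reward-relevance} to reduce the conditioning from $(s,a)$ to $(s^\supp,a)$, and (iii) using the linearity of rewards and transitions in \Cref{asn-linear-mdp} to conclude the backup has a $\supp$-supported linear parameter, so the infimum is attained exactly at zero. The "secondary subtlety" you flag (identifying ``depends only on $s^\supp$'' with ``parameter vanishes off $\supp$'') is handled in the paper by writing the backup directly in terms of the $\supp$-marginalized transition map $\mu_\supp^*$ and sparse features $\phi_\supp$, the same bookkeeping you propose to make explicit.
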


\vspace{-10pt}

\section{Method }

Based on the posited endogenous-exogenous structure, the sparsity in the linear rewards is the same sparsity pattern as the optimal value function. Notably, the transitions are not sparse unless only regressing on the endogenous states alone. In our method, we first run thresholded LASSO on rewards to recover the sparse support. Then we fit the $q$ function via ordinary least squares as the regression oracle in least-squares policy evaluation/iteration on the estimated support. We describe each of these components in turn; thresholded LASSO, and fitted-Q-evaluation, before describing our specific method in more detail.

Our main estimation oracle of interest is a variant of \textit{thresholded LASSO}, described in \Cref{alg-threshlasso}. {We are not limited to thresholded lasso -- we could develop analogous adaptations of any method that performs well for support recovery. We simply require finite-sample prediction error guarantees, high probability inclusion of the entire support, and bounds on the number of false positives.} %

\begin{minipage}[t!]{0.44\textwidth}
\begin{algorithm}[H]
\caption{Thresholded LASSO }\label{alg-threshlasso}
\begin{algorithmic}[1] 
\STATE{Input: (standardized mean-zero and unit variance) covariate matrix $X$, outcome vector $Y$, from data-generating process where $y=w^\top x + \epsilon.$
}
\STATE{Obtain an initial estimator $w_{\text{init}}$ using the Lasso.}
\STATE{Let ${\hat{\supp}=\{j: w_{ \text { init }}^j>
\threshold
\}}.$}
 \STATE{
Compute ordinary least squares restricted to $\hat\supp$: $$\hat{w}^\rho=(X_{\hat\supp_k}^T X_{\hat\supp})^{-1} X_{\hat\supp}^T Y.$$}

\end{algorithmic}
\end{algorithm}
\end{minipage}
\hfill
\begin{minipage}[t!]{0.56\textwidth}
\begin{algorithm}[H]
\caption{Reward-Filtered Fitted Q Iteration }\label{alg-rewardsparse-fqi}
\begin{algorithmic}[1]
    \STATE{At timestep $t=T:$ 
    
    Run thresholded LASSO (\Cref{alg-threshlasso}) on $r_{T}$ and obtain sparse support $\hat\supp_{T}$.
    
    $\pi^*_{T}(s^{\hat\supp_{T}})=\arg\max_a q_{T}(s^{\hat\supp_{T}},a).$
    
    }
    \FOR{ timestep $t=T-1, \ldots, 1$}
    \STATE{Run thresholded LASSO (\Cref{alg-threshlasso}) on $r_t$.
    
    Obtain sparse support $\hat{\supp}_t$.}
    \STATE{Compute Bellman target $$\textstyle y_t=r_t+\gamma \E_{\pi^{*,\supp}_{t+1}}[
 q_{t+1}(s_{t+1},a_{t+1})].$$
    }
    \STATE{Fit Bellman residual restricted to $\hat\supp_t$.
    $$ \textstyle 
\widetilde{\beta}_t\in \underset{\beta \in \mathbb{R}^p}{\arg\min}\{ \frac{1}{2} \E_{n}[ ( \beta^\top \phi_t- y_t)^2 ] \colon \beta_j=0 ,\;j \in \hat{\supp}^c_t \}
$$}
\STATE{   $\pi^*_{t}(s^\supp)=\arg\max_a q_{t}(s^\supp,a).$}
    \ENDFOR

\end{algorithmic}
\end{algorithm}
\end{minipage}

\paragraph{Fitted-Q-Iteration }

Linear fitted-q-evaluation, equivalent to offline least-squares policy evaluation, \citep{ernst2006clinical,le2019batch,nedic2003least,duan2020minimax}, and fitted-Q-iteration \citep{chen2019information,duan2021risk} successively approximate $\hat{q}_t$ at each time step by minimizing an empirical estimate of the Bellman error:
\if\foraistats 1
\begin{align*}
    y_t(q) &\coloneqq r_t +  \max_{a'} \left[q(s_{t+1},a')\right],\\
q_t(s,a) &=
\E[ y_t(q_{t+1})  | s_t = s, a_t = a],\\
    \hat{q}_t&
    \in \arg\min_{q_t \in \mathcal Q }{\E}_{n,t}[ (y_t(\hat{q}_{t+1})-q_t(s_t,a_t))^2].\label{eqn-nomoutcomes}%
\end{align*} 
\fi
\if\forneurips 1
\begin{align*}
    y_t(q) &\coloneqq r_t +  \max_{a'} \left[q(s_{t+1},a')\right], \; 
q_t(s,a) =
\E[ y_t(q_{t+1})  | s_t = s, a_t = a],\\
    \hat{q}_t&
    \in \arg\min_{q_t \in \mathcal Q }{\E}_{n,t}[ (y_t(\hat{q}_{t+1})-q_t(s_t,a_t))^2].\label{eqn-nomoutcomes}%
\end{align*} 
\fi
The Bayes-optimal predictor of $y_t$ is the true $q_t$ function, even though $y_t$ is a stochastic approximation of $q_t$ that replaces the expectation over the next-state transition with a stochastic sample thereof (realized from data). 

\paragraph{Our method}

Our algorithm, described in \Cref{alg-rewardsparse-fqi}, is a natural modification of these two ideas. At the last timestep, we simply run thresholded lasso on the rewards and set the optimal policy to be greedy with respect to the sparsely-supported reward. At earlier timesteps, we first run thresholded lasso on the rewards and recover an estimate of the sparse support, $\rho_t$. Then, we fit the Bellman residual $(r_t+\E_{\pi^{*,\supp}_{t+1}}[
 q_{t+1}(s_{t+1},a_{t+1})] - q_t(s_t,a_t))^2$ over linear functions of $\phi_t$ that are supported on $\rho_t$. That is, we use the sparse support estimated from rewards only in order to sparsely fit the $q_t$ function. Again we set the optimal policy to be greedy with respect to the sparse $q_t$ function.

\if\foraistats 1 
\paragraph{Why not simply run thresholded LASSO fitted-Q-iteration? }
Lastly, we provide some important motivation by outlining potential failure modes of simply applying thresholded lasso fitted-Q-iteration (without specializing to the endogenous-exogenous structure here). 
The first iteration (last timestep), $q_{T}= R_{T}$. %
So thresholded regression at last timestep is analogous to thresholded reward regression. Note that if reward regression succeeds at time ${T}$, then we are integrating a dense measure against the sparse function $V_{T}$. On the other hand, mistakes in time ${T}$ will get amplified (i.e. upboosted as ``signal" by the dense transition measure). Our reward-thresholded LASSO will not accumulate this error based on the structural assumptions. Without these structural assumptions, it would be unclear whether the rewards are truly dense or whether the dense transitions are amplifying errors in support recovery on the rewards. 
\fi 
\vspace{-10pt}

\section{Analysis}
\vspace{-10pt}

We show a predictive error bound, approximate Bellman completeness under the strong-signal support inclusion of thresholded LASSO, and improvement in policy value. 
The main technical contribution of our work is the finite-sample prediction error bound for the reward-thresholded fitted-Q-regression. Typical prediction error analyses of thresholded lasso do not directly apply to our setting, where we recover the support from the reward and apply it directly to the $q$-function estimation. The key observation is that the two regressions share covariance structure and some outcome structure in part. Given this result on the finite-sample prediction error and high-probability inclusion of high-signal sparse covariates, since fitted-Q-evaluation analysis uses prediction bounds on regression in a black-box way, we immediately obtain results on policy value. See \citep{buhlmann2011statistics, ariu2022thresholded,zhou2010thresholded} for discussion of analysis of thresholded LASSO.

\subsection{Preliminaries: standard convergence results for thresholded LASSO}
 Let $x_t= \phi(s_t,a_t)$ denote regression covariates, with $y_t$ the Bellman residual; in this statement we drop the timestep for brevity and let $(X,Y)$ denote the data matrix and outcome vector, e.g. at a given timestep concatenated over trajectories. Our first assumption is that transition probabilities are time-homogeneous. 
\begin{assumption}\label{asn-time-homogeneous}
    Time-homogeneous transitions. 
\end{assumption}

Next we define problem-dependent constants used in the analysis, assumptions, and statements. 
\begin{definition}[Problem-dependent constants.]
For $a\geq 0$, define 
\if\foraistats 1
\begin{align} \lambda_{\sigma, a, d}&\coloneqq \sigma \sqrt{1+a} \sqrt{2 \log p / n}, \\
\mathcal{E}_a & \ts 
\coloneqq \left\{\epsilon:\left\|X^T \epsilon / n\right\|_{\infty} \leq \lambda_{\sigma, a, p}\right\}.
\end{align} 
\fi
\if\forneurips 1
\begin{align} \lambda_{\sigma, a, d}&\coloneqq \sigma \sqrt{1+a} \sqrt{2 \log p / n}, 
\mathcal{E}_a \ts 
\coloneqq \left\{\epsilon:\left\|X^T \epsilon / n\right\|_{\infty} \leq \lambda_{\sigma, a, p}\right\}.
\end{align} 
\fi 
$\lambda_{\sigma, a, d}$ bounds the maximum correlation between the noise and covariates of $X$ and $\mathcal{E}_a$ is a high probability event where ${P}\left(\mathcal{E}_a\right) \geq 1-\left(\sqrt{\pi \log p} p^a\right)^{-1}$ when $X$ has column $\ell_2$ norms bounded by $\sqrt{n}$.
Let $\oracsparscard \leq s$ be the smallest integer such that:
\if\foraistats 1
    $$\ts
\sum_{i=1}^p \min \left(\beta_i^2, \lambda^2 \sigma^2\right) \leq \oracsparscard \lambda^2 \sigma^2.%
$$
\fi
\if\forneurips 1
$\ts
\sum_{i=1}^p \min \left(\beta_i^2, \lambda^2 \sigma^2\right) \leq \oracsparscard \lambda^2 \sigma^2.%
$
\fi 

Let $\mathcal{T}_0$ denote the largest $\oracsparscard$ coordinates of $\beta$ in absolute values. 
 Define an active set of strong-signal coordinates, for which we would like to assure recovery, and $\activeset \subseteq \largestszerocoords \subset \supp$:
 \begin{equation}\activeset=\left\{j:\left|\beta_j\right|>\lambda \sigma\right\}, \label{eqn-def-activeset}\end{equation} 
\end{definition} 

We assume standard restricted-eigenvalue conditions and beta-min conditions for support inclusion results. 
\begin{assumption}[Restricted Eigenvalue Condition $RE(\sparscard, k_0, X)$ (Bickel et al., 2009)]\label{asn-rec} 
Let $X$ be the data matrix. Define
$$
\frac{1}{\kappa\left(\sparscard, k_0\right)} \triangleq \min_{\substack{J_0 \subseteq\{1, \ldots, d\} \\
\left|J_0\right| \leq \sparscard}} \min _{\left\|v_{J_0}\right\|_1 \leq k_0\left\|v_{J_0}\right\|_1} \frac{\|X v\|_2}{\sqrt{n}\left\|v_{J_0}\right\|_2}.$$
For some integer $1 \leq \sparscard \leq d$ and a number $k_0>0$, it holds for all $v \neq 0$,
\if\foraistats 1
\begin{align*} 
&{\kappa\left(\sparscard, k_0\right)^{-1}}>0, \\
&  \Lambda_{\min }(2 \sparscard) \coloneqq \underset{v \neq 0, \norm{v}_0 \leq 2 \sparscard
}{\min} \frac{\|X v\|_2^2}{n\|v\|_2^2}>0 , \\
& \Lambda_{\min }(2 \sparscard) \coloneqq \underset{v \neq 0, \norm{v}_0 \leq 2\sparscard 
}{\max} \frac{\|X v\|_2^2}{n\|v\|_2^2}>0.
\end{align*} 
\fi 
\if\forneurips 1
\begin{align*} 
&\textstyle {\kappa\left(\sparscard, k_0\right)^{-1}}>0,   \;\Lambda_{\min }(2 \sparscard) \coloneqq \underset{v \neq 0, \norm{v}_0 \leq 2 \sparscard
}{\min} \frac{\|X v\|_2^2}{n\|v\|_2^2}>0 , 
\; \Lambda_{\min }(2 \sparscard) \coloneqq \underset{v \neq 0, \norm{v}_0 \leq 2\sparscard 
}{\max} \frac{\|X v\|_2^2}{n\|v\|_2^2}>0.
\end{align*} 
\fi 
\end{assumption}

The restricted eigenvalue condition of \Cref{asn-rec} is one of the common assumptions for LASSO. It corresponds to assuming well-conditioning of the matrix under sparse subsets. It also ensures that the behavior policy provides good coverage over relevant features; indeed it characterizes coverage for linear function approximation \citep{duan2020minimax}.

\begin{assumption}[Beta-min condition on strong signals]\label{asn-betamin}
 $\beta_{\min,\activeset}\coloneqq
    \min_{j \in\activeset}\left|\beta_j\right|>\lambda \sigma_r$.
\end{assumption}
\Cref{asn-betamin} is a signal-strength condition, that the smallest coordinate of the active set is separated from the threshold defining the active set. This prevents knife-edge situations where a relevant coordinate is not recovered (but is also of irrelevant signal strength). Analogous assumptions are generally required to show support inclusion. \Cref{asn-betamin} is somewhat milder; instead imposing a stronger version would give correspondingly stronger recovery results. 

Under these assumptions, our main result is a prediction error bound on $q$-function estimation under reward-thresholded lasso, under given rate conditions on threshold and regularization strength of initial lasso.
\begin{theorem}[Prediction error bound for reward-thresholded LASSO]\label{thm-q-predictionerror}
Suppose \Cref{asn-block-indep,asn-linear-mdp,asn-rec,asn-reward-relevance,asn-time-homogeneous,asn-betamin}. 
Suppose \Cref{asn-rec}, $\operatorname{RE}\left(\oracsparscard, 4, X\right)$ holds with $\kappa\left(\oracsparscard, 4\right)$. 

Let $\beta_{\text {init }}$ be an optimal solution to $\textrm{LASSO}(\phi, r; \lambda_n)$, e.g. lasso regression of rewards on features, with $\lambda_n \geq \frac{\norm{X\epsilon_\theta}_\infty}{n}$.
Suppose that for some constants $\breve{D}_1 \geq D_1$, and for $D_0(\Lambda_{\max}, \Lambda_{\min}, \sparscard, \oracsparscard), D_1(\Lambda_{\max}, \Lambda_{\min}, \sparscard, \oracsparscard)$ specified in the appendix, it holds that
$
\beta_{\min , \activeset} \geq D_0 \lambda_n \sigma \sqrt{\oracsparscard}+\breve{D}_1 \lambda_n \sigma.
$
Choose {threshold} $\threshold=C \lambda \sigma \geq 2 \sqrt{1+a} \lambda \sigma$, for some constant $C \geq D_1$. Let $\mathcal{I}$ be the recovered support on $\beta_{\text{init}}.$
$$
\mathcal{I}=\left\{j:\left|\beta_{j, \text {init }}\right| \geq \threshold\right\}, \text { where } \threshold \geq \breve{D}_1 \lambda \sigma .
$$%
Then on $\mathcal{E}_a$, 
\if\foraistats 1
\begin{align*}
    \activeset \subset \covsset, \; |\covsset| \leq 2\oracsparscard, \text{ and } \left|\covsset \cap \largestszerocoords^c\right| \leq \oracsparscard
\end{align*}
\fi
\if\forneurips 1 
$   \activeset \subset \covsset, \; |\covsset| \leq 2\oracsparscard, \text{ and } \left|\covsset \cap \largestszerocoords^c\right| \leq \oracsparscard.$
\fi 
And, with high probability we have predictive error bounds: 
    \begin{align*}
&\textstyle 
\frac 1n  \norm{X \hat{\theta}-X \theta^* }_2^2 \leq 
4 \frac{\sigma^2_q (\vert \covsset \vert (1 + 468 \log (2d)) +2 ( 1 + 2 \sqrt{\vert \covsset \vert}) 
 }{n}.
    \end{align*}
\end{theorem}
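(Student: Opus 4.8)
The plan is to split the argument into two phases that mirror the algorithm: a \emph{support-recovery} phase driven entirely by the reward LASSO, and a \emph{prediction-error} phase for the ordinary-least-squares refit of the $q$-function on the recovered support $\covsset$. The two phases are coupled, because $\covsset$ is random and the Bellman-target noise is correlated with the reward noise (the reward is literally a summand of the Bellman target); reconciling this coupling is the crux.

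\textbf{Support recovery.} Working on the event $\highprobevent_a$, I would first invoke standard thresholded-LASSO control in the style of \citep{zhou2010thresholded,buhlmann2011statistics} for the initial estimator $\beta_{\text{init}}$ of the reward regression. The restricted-eigenvalue condition $RE(\oracsparscard,4,X)$ supplies the usual $\ell_2$/$\ell_\infty$ deviation of $\beta_{\text{init}}$ from $\beta$ of order $\lambda_n\sigma\sqrt{\oracsparscard}$. The beta-min lower bound $\beta_{\min,\activeset}\geq D_0\lambda_n\sigma\sqrt{\oracsparscard}+\breve{D}_1\lambda_n\sigma$ is then exactly what is needed so that every strong coordinate $j\in\activeset$ survives the shrinkage and still exceeds the threshold $\threshold=C\lambda\sigma$, giving $\activeset\subset\covsset$. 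The size bound $|\covsset|\leq 2\oracsparscard$ and the false-positive bound $|\covsset\cap\largestszerocoords^c|\leq\oracsparscard$ follow from the effective-sparsity definition of $\oracsparscard$ (bounding how many coordinates of $\beta$ can survive the threshold) together with the restricted-eigenvalue constant; this is where $D_0, D_1, \breve D_1$ are pinned down.

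\textbf{Transfer to the $q$-function and prediction error.} The novel ingredient is that $\covsset$ is recovered from the \emph{reward} but applied to the $q$-function fit. By \Cref{prop-policy-sparsity,prop-bellman-complete} the target $q$-function is $\supp$-sparse, so writing the Bellman target as $Y=X\theta^*+W$ with $W$ a mean-zero, $\sigma_q$-subgaussian residual, the OLS refit satisfies $X\hat\theta=P_{\covsset}Y$ where $P_{\covsset}$ projects onto the column span of $X_{\covsset}$. I would then split
\[
\tfrac1n\lrnorm{X\hat\theta-X\theta^*}_2^2 \leq \tfrac2n\lrnorm{(I-P_{\covsset})X\theta^*}_2^2 + \tfrac2n\lrnorm{P_{\covsset}W}_2^2
\]
into a bias term and a variance term. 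The bias term is controlled because the coordinates of $\theta^*$ outside $\covsset$ are exactly the sub-threshold ones absorbed by the effective sparsity $\oracsparscard$; here the shared sparsity pattern between reward and $q$-function is essential, since $\covsset$ was selected from the reward alone. The variance term is $\sigma_q^2|\covsset|/n$ in expectation for a \emph{fixed} support.

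\textbf{Main obstacle.} Because $\covsset$ is data-dependent and $W$ is not independent of it, I cannot condition on $\covsset$ and treat $\tfrac1n\lrnorm{P_{\covsset}W}_2^2$ as a fixed-dimensional $\chi^2$ variable. Instead I would bound it \emph{uniformly} over all candidate supports $S$ with $|S|\leq 2\oracsparscard$, using a subgaussian/chi-squared tail inequality for $\lrnorm{P_S W}_2^2$ combined with a union bound over the $\binom{d}{\le 2\oracsparscard}$ admissible sets. Since $\log\binom{d}{k}\lesssim k\log(2d)$, this converts the fixed-support rate $|\covsset|/n$ into the $|\covsset|(1+468\log(2d))/n$ of the statement, with the explicit constant $468$ and the lower-order $2(1+2\sqrt{|\covsset|})$ correction emerging from the deviation term of the concentration inequality; the leading factor $4$ then comes from collecting the two split constants with the concentration constants. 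The delicate point that ties everything together is that a single design $X$ and a single deviation event $\highprobevent_a$ must simultaneously drive reward-side support recovery and the uniform $q$-side variance control, and that the union-bound radius $2\oracsparscard$ is consistent with the recovered $|\covsset|\leq 2\oracsparscard$ — which is precisely what makes the reward-selected support a legitimate model for the $q$-regression.
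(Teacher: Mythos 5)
Your two-phase structure matches the paper: the support-recovery phase (reward LASSO, beta-min condition, $\activeset\subset\covsset$, $|\covsset|\le 2\oracsparscard$) is exactly the paper's first step, which simply invokes Theorem 6.3 of \citet{zhou2010thresholded}. Where you genuinely diverge is the prediction-error phase. The paper does not do your projection bias/variance split followed by a union bound over all supports of size at most $2\oracsparscard$. Instead, it handles the selection/noise coupling structurally (its intermediate result, \Cref{thm-q-covsset}): it subtracts from the $q$-regression a thresholded reward regression $\hat\beta$ fit on the \emph{same} support $\covsset$, so that the difference $(X\hat\theta-X\theta^*_\covsset)-(X\hat\beta-X\beta^*_\covsset)$ is exactly the OLS regression of the value part $\gamma v(s_{t+1})$ on $X_\covsset$ --- the reward noise cancels --- and this term is bounded by a fixed-support random-design OLS bound (\Cref{lemma-randdesign-leastsq-mse}); the leftover reward-regression term is bounded by the standard LASSO basic inequality under \Cref{asn-rec} plus a maximal inequality over the $\ell_0$ ball (\Cref{lemma-maximalinequality-l0ball}). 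In the paper the $\log d$ factors thus enter through $\lambda^2$ and the maximal inequality, whereas in your argument they enter through $\log\binom{d}{2\oracsparscard}$; both give the stated order. The trade-off is real: your union bound is more generic and does not require the Bellman-target noise to be independent of the selection event --- a point the paper's use of a fixed-support OLS bound on the value-part regression implicitly leans on, and which is delicate precisely because the reward noise that drives selection is a summand of the Bellman-target noise (your route sidesteps this dependence; the paper's route cancels it). One shared soft spot: your bias term $\frac 2n\lVert(I-P_{\covsset})X\theta^*\rVert_2^2$ involves the $q$-coefficients $\theta^*$ on unselected coordinates, while the recovery guarantees control only the \emph{reward} coefficients $\beta$ there, so calling these "exactly the sub-threshold ones" conflates the two parameter vectors; the paper has the same issue (its intermediate theorem assumes $\supp\subseteq\covsset$ and controls $\lVert\beta_{\mathcal{D}}\rVert_2$ rather than $\lVert\theta_{\mathcal{D}}\rVert_2$), so this does not put you below the paper's level of rigor, but a fully complete proof on either route would need the Bellman recursion to transfer weak reward signal into weak $q$ signal.
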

Given this ``fast rate" on the prediction error of the reward-thresholded LASSO, we obtain a bound on the policy error of the fitted-Q-iteration procedure that depends primarily on the \textit{sparsity} (up to constant factors) rather than the potentially \textit{high-dimensional state}. The analysis is standard, given the result we prove above specialized for our method. Note that we did not attempt to optimize problem-independent constants in our analysis.

Before we do so, we show how the thresholded procedure also quantifies an important structural restriction for policy evaluation/optimization: \textit{(approximate) Bellman completeness}, which states that the Bellman operator is approximately closed under the regression function class. Although \Cref{prop-bellman-complete} establishes that the class of linear functions restricted to the sparse component is Bellman complete, in practice, thresholding noisy estimates may lead to false positives and false negatives. Our previous analysis establishes that these are of controlled magnitude due to the choices of thresholding and regularization parameter. This also implies that the \textit{misspecification bias due to finite-sample estimation} is also vanishing in $n$ at the same rate, stated in the following proposition on approximate instance-dependent Bellman completeness.
\begin{proposition}[Bound on Bellman completeness violation under approximate recovery]\label{prop-apx-completeness}
With high probability, under $\mathcal{E}_a,$
    $$ \textstyle 
\underset{q_{t+1} \in \mathcal{Q}_{\covsset, \rho\setminus\activeset \not\subseteq \covsset}}{\sup} \underset{q_t \in \mathcal{Q}_{\covsset, \rho\setminus\activeset \not\subseteq \covsset}}{\inf}
     \|q_t-\mathcal{T}_t^{\star} q_{t+1}\|_{\mu_t}^2
= O_p(n^{-1}).
     $$
\end{proposition}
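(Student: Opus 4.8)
The plan is to leverage the exact completeness of the true sparse class established in \Cref{prop-bellman-complete} and to reduce the violation to the coordinates on which the recovered support $\covsset$ disagrees with the true reward support $\supp$. On the event $\mathcal{E}_a$, \Cref{thm-q-predictionerror} pins down this disagreement exactly: strong signals are always recovered, $\activeset\subset\covsset$, so the coordinates of $\supp$ that can be missed are weak signals $\supp\setminus\covsset\subseteq\supp\setminus\activeset$, each with $|\beta_j|\le\lambda\sigma$; and the false positives are few, $|\covsset\cap\largestszerocoords^c|\le\oracsparscard$ with $|\covsset|\le 2\oracsparscard$. The violation is therefore driven entirely by these $O(\oracsparscard)$ mismatched, weak-signal coordinates rather than by the ambient dimension.

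First I would fix an arbitrary $q_{t+1}\in\mathcal{Q}_{\covsset,\supp\setminus\activeset\not\subseteq\covsset}$, write the true Bellman image $\mathcal{T}_t^\star q_{t+1}$ through its linear coefficients $w=\mathcal{T}_t(\theta_{t+1})$, and take as approximant the feasible function $q_t$ whose coefficients are $w$ restricted to $\covsset$. The resulting error is the $\mu_t$-weighted squared norm of $\mathcal{T}_t^\star q_{t+1}$ carried outside $\covsset$, which splits into two pieces: the mass of $w$ on the dropped weak coordinates $\supp\setminus\covsset$, and the mass that the exogenous (false-positive) coordinates of $q_{t+1}$ push outside $\covsset$ through the exogenous dynamics. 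Using the upper restricted-eigenvalue bound of \Cref{asn-rec} to pass from coefficients to the population feature norm, both pieces are bounded by $\Lambda_{\max}$ times a sum of squared coefficients over at most $\oracsparscard$ coordinates.

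It remains to bound those squared coefficients. For the dropped weak coordinates the reward contribution is immediate: $\sum_{j\in\supp\setminus\covsset}\beta_j^2\le\sum_j\min(\beta_j^2,\lambda^2\sigma^2)\le\oracsparscard\lambda^2\sigma^2$ by the defining property of $\oracsparscard$. For the false-positive propagation I would use that the coordinates in $\covsset\setminus\activeset$ carry only $O(\lambda\sigma)$ signal, so $q_{t+1}$'s coefficients there are $O(\lambda\sigma)$ and their image under $\gamma$ times the exogenous transition block is again $O(\lambda\sigma)$ per coordinate. Summing over $O(\oracsparscard)$ coordinates and recalling $\lambda^2=O(\log p/n)$ gives a total of $O_p(\oracsparscard\lambda^2\sigma^2)=O_p(n^{-1})$, uniformly in $q_{t+1}$, which is the claim and matches the prediction-error rate of \Cref{thm-q-predictionerror}.

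The main obstacle is making the two ``$O(\lambda\sigma)$ per coordinate'' claims rigorous and uniform over $q_{t+1}$. Unlike the reward coefficients, a Bellman-target coefficient on a dropped endogenous coordinate need not be small merely because the reward signal there is weak, since that coordinate could still be influential in the dynamics; likewise, the violation from false positives is harmless only because the admissible $q_{t+1}$ carry vanishing mass on exogenous coordinates. The clean resolution is to phrase the proposition instance-dependently over the class of near-target iterates whose coefficients on $\covsset\setminus\activeset$ are themselves $O(\lambda\sigma)$ — so that reward-weak implies target-weak on the dropped set and the exogenous mass is negligible — and to absorb the dynamics amplification into the operator-norm factor carried through the oracle-sparsity summation. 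This keeps the total mismatch at $O(\oracsparscard\lambda^2\sigma^2)$, exactly the order at which \Cref{thm-q-predictionerror} already accounts for false positives, so no new rate is introduced.
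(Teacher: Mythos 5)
Your proposal reaches the right rate and rests on the same two pillars as the paper's proof---the support-recovery guarantees of \Cref{thm-q-predictionerror} on $\mathcal{E}_a$ (namely $\activeset \subset \covsset$, $|\covsset| \le 2\oracsparscard$, and weak signals of magnitude at most $\lambda\sigma$) and the observation that the violation is driven by $O(\oracsparscard)$ mismatched weak-signal coordinates---but the mechanics are genuinely different. The paper does not truncate the Bellman image: it exchanges the sup and inf, splits the error via the true optimal iterate into a deviation term $\|\mathcal{T}_t^{\star}q^*_{t+1}-\mathcal{T}_t^{\star} q_{t+1}\|_{\mu_t}^2$, bounded by squared $\ell_1$ norms of coefficients on $\covsset\setminus\activeset$ and $\supp\setminus\activeset$ as $\bigl(2s\threshold + \sqrt{2s}\,\norm{\hat\beta-\beta}_2 + s\lambda\sigma\bigr)^2$, plus a restricted regression term that it bounds by invoking the finite-sample restricted-OLS result (\Cref{thm-q-covsset}). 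Your construction---taking $q_t$ to be the coefficients of $w=\mathcal{T}_t(\theta_{t+1})$ truncated to $\covsset$, passing to the $\mu_t$-norm via $\Lambda_{\max}$ from \Cref{asn-rec}, and bounding the dropped mass by the defining inequality of $\oracsparscard$---is a purely population-level approximation argument, which is arguably more natural for a completeness quantity than the paper's detour through a finite-sample regression theorem; the paper's route buys constants consistent with \Cref{thm-q-predictionerror}, yours buys conceptual simplicity. Most importantly, the obstacle you flag explicitly---that a reward-weak coordinate need not be target-weak because the dynamics can amplify it in the Bellman image, and that false positives are harmless only if admissible $q_{t+1}$ carry threshold-small mass on them---is exactly the step the paper handles by assertion: its $\ell_1$ bounds are justified only by the phrase ``by the algorithm, and by prediction error bound,'' which implicitly restricts the supremum to algorithm iterates whose coefficients on the mismatched coordinates are $O(\threshold+\lambda\sigma)$. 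Your instance-dependent restriction of the class is the explicit version of that same implicit move, so your argument is no less rigorous than the paper's; but be aware that neither argument proves the proposition for the class as literally defined (with unrestricted coefficient magnitudes on false positives), and that controlling the true $q^*$'s coefficients on $\supp\setminus\activeset$ would require an assumption tying dynamics strength to reward signal strength, which neither you nor the paper supplies.
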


With these results, we can establish a finite-sample bound on the policy value under \Cref{alg-rewardsparse-fqi}.
\begin{theorem}[]\label{thm-fqi-bound}
Suppose \Cref{asn-block-indep,asn-linear-mdp,asn-rec,asn-reward-relevance,asn-time-homogeneous,asn-betamin}. 
\if\foraistats 1 
\begin{align*}
&V_1^*(s_1) - V_1^\pi(s_1) \\
&\leq  2T \sqrt{\frac{\concentratabilitycoef \sigma^2_q ( 938 \sparscard  \log (2d) +2 ( 1 + 2 \sqrt{\sparscard}) 
 }{n}} + o_p(n^{-\frac 12}).
\end{align*}
\fi 
\if\forneurips 1 
\begin{align*} \textstyle 
&\textstyle  V_1^*(s_1) - V_1^\pi(s_1) \leq  2T \sqrt{\frac{\concentratabilitycoef \sigma^2_q ( 2 \sparscard (1 + 468 \log (2d)) +2 ( 1 + 2 \sqrt{\sparscard}) 
 }{n}} + o_p(n^{-\frac 12}).
\end{align*}
\fi
\end{theorem}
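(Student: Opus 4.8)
The plan is to reduce the policy suboptimality $V_1^*(s_1) - V_1^\pi(s_1)$ to a telescoping sum of per-timestep Bellman errors, and then control each per-step error using the finite-sample prediction bound of \Cref{thm-q-predictionerror} together with the approximate completeness bound of \Cref{prop-apx-completeness}. First I would invoke the standard performance-difference decomposition for fitted-Q-iteration (as in the black-box analyses of \citep{chen2019information,duan2020minimax}): for the greedy policy $\pi$ induced by the estimated $\{\hat{q}_t\}$, one has
\[
V_1^*(s_1) - V_1^\pi(s_1) \leq 2 \sum_{t=1}^{T} \gamma^{t-1} \bigl\| \hat{q}_t - \mathcal{T}_t^\star \hat{q}_{t+1} \bigr\|_{\nu_t},
\]
where $\nu_t$ is the occupancy measure along the comparison trajectory and the factor $2$ arises from relating greedy-policy suboptimality to $q$-function error. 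The crucial structural input is \Cref{prop-policy-sparsity}: the optimal policy entering the performance-difference step depends only on the sparse component $s^\supp$, so the telescoping is over sparse $q$-functions and the horizon contributes only the factor $T$, not any dimension-dependent quantity.

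Next I would bound each per-step Bellman error by a sum of two pieces: the statistical estimation error of the reward-thresholded regression, and the misspecification bias from imperfect support recovery. The estimation piece is exactly the prediction-error bound of \Cref{thm-q-predictionerror}, which on the event $\highprobevent_a$ gives $\frac1n\norm{X\hat\theta - X\theta^*}_2^2 = O(\sigma_q^2 \vert\covsset\vert \log(2d)/n)$ with $\vert\covsset\vert \leq 2\oracsparscard \leq 2\sparscard$; the misspecification piece is the Bellman-completeness violation, which \Cref{prop-apx-completeness} shows is $O_p(n^{-1})$ and is therefore dominated by the $O(n^{-1/2})$ estimation term, collecting into the $o_p(n^{-1/2})$ remainder. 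I would then convert the empirical ($\mu_t$) norm in the prediction bound to the population comparison norm $\nu_t$ via the restricted-eigenvalue constant of \Cref{asn-rec}: this is precisely where $\concentratabilitycoef$ enters, since $\Lambda_{\min}(2\sparscard)$ lower-bounds the empirical Gram matrix on sparse directions and thereby certifies coverage of the behavior policy over the relevant $\sparscard$-dimensional feature subspace. Substituting $\vert\covsset\vert \leq 2\sparscard$ into the prediction bound gives a uniform per-step bound whose square root, collected over $t=1,\dots,T$ into the prefactor $2T$, yields the stated leading term $2T\sqrt{\concentratabilitycoef \sigma_q^2(938\sparscard\log(2d)+\cdots)/n}$ (the constants $468$ and $938$ tracking directly through from the $\vert\covsset\vert \le 2\sparscard$ substitution).

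I expect the main obstacle to be the careful bookkeeping in the error-propagation step rather than any single hard inequality: one must verify that the FQI recursion does not amplify errors across timesteps, which relies on the function class used at each step being \emph{approximately} Bellman-complete even though its support is \emph{estimated} from the reward regression. \Cref{prop-apx-completeness} is what makes this go through, but applying it requires that the support-inclusion guarantee $\activeset \subset \covsset$ and the false-positive control $\vert\covsset \cap \largestszerocoords^c\vert \leq \oracsparscard$ from \Cref{thm-q-predictionerror} hold \emph{simultaneously} across all $T$ regressions; a union bound over timesteps on $\highprobevent_a$ handles this at the cost of only a lower-order adjustment to the log factor. The remaining care is to confirm that the concentrability constant $\concentratabilitycoef$ is indeed finite and sparse-dimension-dependent under \Cref{asn-rec}, so that the transported per-step error stays of order $\sqrt{\sparscard\log d / n}$ and the completeness bias remains absorbed into $o_p(n^{-1/2})$.
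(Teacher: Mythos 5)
Your proposal follows essentially the same route as the paper's proof: value suboptimality is reduced to per-timestep Bellman errors via the standard FQI performance-difference lemma (the paper's \Cref{lemma-fqi-bmanerrortovaluesuboptimality}), each per-step error is split into a regression prediction-error term controlled by \Cref{thm-q-predictionerror} and a completeness-violation term controlled by \Cref{prop-apx-completeness}, and the latter's $O_p(n^{-1})$ rate is absorbed into the $o_p(n^{-1/2})$ remainder, with the same constant bookkeeping from $\vert\covsset\vert \leq 2\oracsparscard$. Your added observation that the support-recovery events must hold simultaneously across all $T$ regressions (via a union bound) is a detail the paper's proof leaves implicit, but it does not constitute a different approach.
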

The result follows straightforwardly given our predictive error bound and standard analysis of fitted-Q-iteration. 
This sample complexity result improves upon prior work since it now depends on the underlying sparsity rather than the full ambient dimension.

\vspace{-10pt}
\section{Experiments}
\vspace{-10pt}

\begin{figure*}
    \centering
\begin{subfigure}{0.33\textwidth}
    \includegraphics[width=\textwidth]{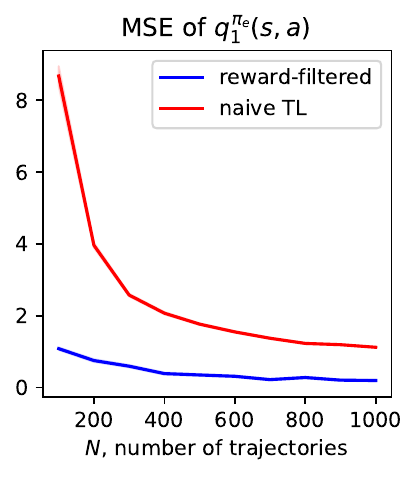}
    \caption{MSE of $\hat q^{\pi_e}_1$ under naive thresholding of q-estimation vs. reward-filtered evaluation}
    \label{subfig:mse}
\end{subfigure}\begin{subfigure}{0.33\textwidth}
    \includegraphics[width=\textwidth]{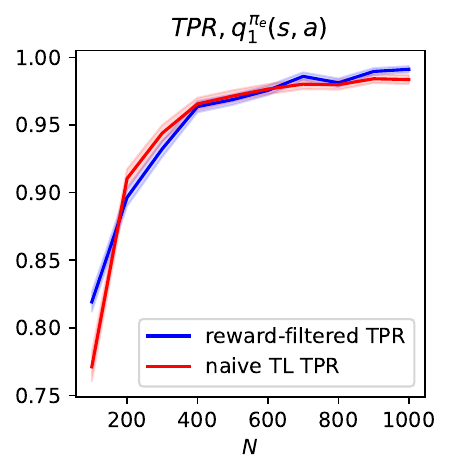}
    \caption{True positive rate, recovered support of $\hat q^{\pi_e}_1$}
    \label{subfig:tpr}
\end{subfigure}\begin{subfigure}{0.33\textwidth}
    \includegraphics[width=\textwidth]{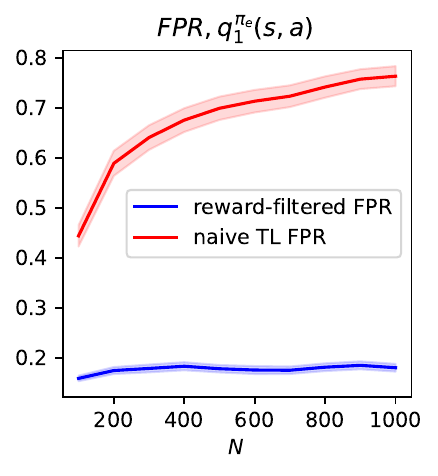}
    \caption{False positive rate, recovered support of $\hat q^{\pi_e}_1$}
    \label{subfig:fpr}
\end{subfigure}
\end{figure*}
We first consider a simulated setting to validate the method. Our primary comparison is with thresholded LASSO regression for fitted-Q-evaluation. This highlights the benefit of tailoring estimation for the inductive bias. \if\forneurips 1 
The figures are in \Cref{subfig:fpr,subfig:mse,subfig:tpr} but due to space constraints, see the Appendix for experimental details. We benchmark against naive thresholded LASSO estimation and improve estimation error while controlling false positives incorrectly included by the baseline.
\fi
\if\foraistats 1 
In the data-generating process, we first consider $\vert \mathcal{S} \vert = 50, \sparscard = 10$, and $\mathcal{A} = \{ 0, 1\}$. The reward and states evolve according to $$r_t(s,a) = \beta^\top \phi_t(s,a) + \epsilon_r, \;\; s_{t+1}(s,a) = M_a s + \epsilon_s.$$ Recalling that $M_a = \begin{bmatrix} M_{a}^{\supp \to \supp } & 0 \\
M_{a}^{\supp \to {\supp_c} } & M_{a}^{{\supp_c} \to {\supp_c} } 
\end{bmatrix},$ we generate the coefficient matrix with independent  normal random variables $\sim N(0.2, 1)$. (Note that the nonzero mean helps ensure the beta-min condition). The zero-mean noise terms are normally distributed with standard deviations $\sigma_s = 0.4, \sigma_r = 0.6.$ In the estimation, we let $\phi(s,a)$ be a product space over actions, i.e. equivalent to fitting a $q$ function separately for every action. 

We first show experiments for policy evaluation in the main text due to space constraints. Fitted-Q-evaluation is similar to fitted-Q-iteration, but replaces the max over q functions with the expectation over actions according to the next time-step's policy. See the appendix for additional experiments for policy optimization specifically. We compare our reward-filtered estimation using \Cref{alg-rewardsparse-fqi} with naive thresholded lasso, i.e. thresholding lasso-based estimation of q-functions alone in \Cref{subfig:fpr,subfig:mse,subfig:tpr}. (We average the $q$ function over actions; results are similar across actions). The behavior and evaluation policies are both (different) logistic probability models in the state variable, with the coefficient vector given by (different) random draws from the uniform distribution on $[-0.5,0.5].$ We average over 50 replications from this data generating process and add standard errors, shaded, on the plot. The first plot, \Cref{subfig:mse}, shows the benefits in mean-squared error estimation of the q-function $q_1^{pi_e}(s,a),$ relative to the oracle $q$ function, which is estimated from a separate dataset of $n=20000$ trajectories. The reward-filtered method achieves an order of magnitude smaller mean-squared error for small sample sizes, with consistent improvement over thresholded LASSO estimation on the $q$ function alone. Next in \Cref{subfig:tpr} we show the true positive rate: both methods perform similarly in including the sparse component the recovered support. But the last plot of \Cref{subfig:fpr} shows that the naive thresholded lasso method includes many exogenous variables that are not necessary to recover the optimal policy, while the false positive rate for the reward-filtered method is controlled throughout as a constant fraction of the sparsity. Overall this simple simulation shows the improvements in estimation of the $q$ function (which translate down the line to improvements in decision-value) under this special structure.  
\fi 
\paragraph{Acknowledgments }
AZ thanks participants at the Simons Program on Causality for inspiring discussions on causal representation learning, even though she still mostly thinks about MDPs. AZ also thanks Dan Malinsky for discussions on graphs and structure at an early stage of this project. 
\bibliography{structure-in-rl,rl}
\bibliographystyle{abbrvnat}

\appendix

\onecolumn

\section{Further Discussion}
\if\forneurips 1 

\textbf{Interpreting Assumption 1:}
We can also specify a corresponding probabilistic model. Suppose $P_a(s_{t+1} \mid s_t) \sim N(\mu_a, \Sigma_a).$ and that ${P_a(s_{t+1} \mid s_t)}$ is partitioned (without loss of generality) as $P_a(s_{t+1}^\supp,s_{t+1}^{\supp_c}  \mid s_{t}^\supp,s_{t}^{\supp_c})$. 
Then by \Cref{asn-reward-relevance} $$P_a(s_{t+1}^\supp  \mid s_{t}^\supp) \overset{D}{=} P_a(s_{t+1}^\supp  \mid s_{t}^\supp,s_{t}^{\supp_c}) \sim N(\mu^\supp_a, \Sigma^{\supp,\supp}_a).$$
where the first equality in distribution follows from the conditional independence restriction of \Cref{asn-reward-relevance} and the parameters of the normal distribution follow since marginal distributions of a jointly normal random variable follow by subsetting the mean vector/covariance matrix appropriately.

\begin{remark}
    Similar to previous works studying similar structures, we assume this structure holds. If it may not,
    we could use model selection methods \citep{lee2022oracle}: if we incorrectly assume this structure, we would obtain a completeness violation; so the model selection method's oracle inequalities would apply and be rate-optimal relative to non-sparse approaches. We emphasize that we don't posit this method as a general alternative to general sparsity, but rather as a simple principled approach to estimate in settings with this exogenous structure. 
\end{remark}
\fi 
\if\forneurips 1 
\paragraph{Why not simply run thresholded LASSO fitted-Q-iteration? }
Lastly, we provide some important motivation by outlining potential failure modes of simply applying thresholded lasso fitted-Q-iteration (without specializing to the endogenous-exogenous structure here). 
The first iteration (last timestep), $q_{T}= R_{T}$. %
So thresholded regression at last timestep is analogous to thresholded reward regression. Note that if reward regression succeeds at time ${T}$, then we are integrating a dense measure against the sparse function $V_{T}$. On the other hand, mistakes in time ${T}$ will get amplified (i.e. upboosted as ``signal" by the dense transition measure). Our reward-thresholded LASSO will not accumulate this error based on the structural assumptions. Without these structural assumptions, it would be unclear whether the rewards are truly dense or whether the dense transitions are amplifying errors in support recovery on the rewards. 
\fi 

\section{Further details on method}
\paragraph{Choosing the penalty level in practice}

A data driven suggestion of \citep{belloni2013least} is to choose $$ \lambda = \frac{c' \hat \sigma \Lambda(1-\alpha\mid X)}{n}$$ where $\Lambda(1-\alpha\mid X)$ is the $(1-\alpha)$ quantile of $n \| S / \sigma \|_\infty$. They also suggest to choose a data-driven upper bound for $\hat\sigma_0$ the sample deviation of $y_i$, compute LASSO, and then set $\hat \sigma^2 = \hat q(\hat \beta)$. 

\section{Proofs}

\subsection{Proofs of characterization}
\begin{proof}[Proof of \Cref{prop-policy-sparsity}]

    The proof follows by induction. We first show the base case, for $t=T$. Recall that we take the convention that $r_{T} = q_{T}=0,$ so that $q_{T}(s,a) = r_{T}(s,a)$. Therefore since $r_{T}(s,a) = r_{T}(\tilde{s},a)$ for $s_{T},\tilde{s}_{T}$ such that $s^\supp_T = \tilde{s}^\supp_{T}$, we also have that for 
    $$\pi^*_{T}(s) \in \arg\max_{a\in\mathcal{A}} r(s,a), \qquad \tilde{\pi}^*_{T}(\tilde{s}) \in \arg\max_{a\in\mathcal{A}} r(\tilde{s},a),$$
    and when $s^\supp_{T} = \tilde{s}^\supp_{T},$
   $\pi^*_{T}(s_{T}) =\tilde{\pi}^*_T(\tilde{s}_{T}).$
   Therefore $q^*_{T}(s_{T},a) = q^*_T(\tilde{s}_{T},a)$ when $s^\supp = \tilde{s}^\supp.$
    Next we show the inductive step. The inductive hypothesis is that $$q^*_{t+1}(s_{t+1},a)=q^*_{t+1}(s_{t+1}^\supp,a) = q^*_{t+1}(\tilde{s}_{t+1},a), \forall a \in \mathcal{A},\text{ and }\pi^*_{t+1}(s_{t+1}) =\tilde{\pi}^*_{t+1}(\tilde{s}_{t+1})\text{ when }s^\supp_{t+1} = \tilde{s}^\supp_{t+1}.$$

    Then for 
    \begin{align*}
        \pi^*_t(s) &\in \arg\max_{a\in\mathcal{A}} \{r_t(s,a)+
        \gamma \E[ q^*_{t+1}(s_{t+1},\pi_{t+1}^*(s_{t+1})) \mid s,a] \}\\
        \tilde{\pi}^*_t(\tilde{s}) &\in \arg\max_{a\in\mathcal{A}}
        \{r_t(\tilde{s},a)+\gamma \E[ q^*_{t+1}(\tilde{s}_{t+1},\pi_{t+1}^*(s_{t+1})) \mid \tilde{s},a]\}
    \end{align*}
we have that 
\begin{align*}q_t^*(s_t,a)   =\; &r_t(s_t,a)+\gamma \E[ q^*_{t+1}(s_{t+1},\pi_{t+1}^*(s_{t+1})) \mid s_t,a] \\
   =\; &  r_t(s_t,a)+\gamma \E[ q^*_{t+1}(s_{t+1}^\supp,\pi_{t+1}^*(s_{t+1}^\supp)) \mid s_t,a] && \tag{induction hypothesis}\\
   =\; &  r_t(s_t,a)+\gamma \E[ q^*_{t+1}(s_{t+1}^\supp,\pi_{t+1}^*(s_{t+1}^\supp)) \mid s^\supp_t,a] && \tag{\Cref{asn-reward-relevance}}\\
   =\;&
r_t(\tilde{s}_t,a)+\gamma \E[ q^*_{t+1}(\tilde{s}_{t+1},\pi_{t+1}^*(\tilde{s}_{t+1})) \mid \tilde{s}^\supp_t,a] \tag{$s^\supp_t = \tilde{s}^\supp_t$}\\
= \;&q_t^*(\tilde{s}_t,a) 
\end{align*}
 when $s^\supp_t = \tilde{s}^\supp_t$.

     Therefore, when $s^\supp_t = \tilde{s}^\supp_t,$
   $$\pi^*_t({s}_t) =\tilde{\pi}^*_t(\tilde{s}_t).$$
   This completes the induction.
\end{proof}

\begin{proof}[Proof of \Cref{prop-bellman-complete}]
   
   Let $\pi^*(s_{t+1}) \in \arg\max_{a \in \mathcal{A}} \breve{q}(s_{t+1},a)$. Note that when $\breve{q}\in\breve{\mathcal{q}}_{t+1},$ the optimal action remains the same for states that differ only outside of the sparse support: $\pi^*(s_{t+1}) = \pi^*(\tilde{s}_{t+1})$ when $s_{t+1}^\supp = \tilde{s}_{t+1}^\supp.$

Therefore for any $\breve{q} \in \breve{\mathcal{q}}_{t+1},$
\begin{align*}
    \mathcal{T}^* \breve{q} & = 
    \E_{s_{t+1}^\supp } \left[\E_{s_{t+1}^{\supp_c}} \left[  \breve{q}_{t+1}(s_{t+1},a^*(s_{t+1})) \mid s_{t+1}^\supp, s,a   \right] \mid s,a \right] \\
  &  = 
    \E_{s_{t+1}^\supp }  \left[  {q}_{t+1}(s_{t+1}^\supp,a^*(s_{t+1}^\supp)) \mid s,a \right] \\
    & = 
    \E_{s_{t+1}^\supp }  \left[  {q}_{t+1}(s_{t+1}^\supp,a^*(s_{t+1}^\supp)) \mid s^\supp,a \right] && \text{ by \Cref{asn-reward-relevance}}
\end{align*}

where the second-to-last equality holds since $\breve{q}_{t+1}(s_{t+1},a^*(s_{t+1}))=\breve{q}_{t+1}(\tilde{s}_{t+1},a^*(s_{t+1}))$ when $s_{t+1}^\supp = \tilde{s}_{t+1}^\supp,$ for any $\breve{q}_{t+1} \in \breve{q}_{t+1}.$ 

Next we show that under \Cref{asn-linear-mdp,asn-reward-relevance}, $\E[r_t(s,a) +  \mathcal{T}^* \breve{q}_{t+1}\mid s,a]$ is linear and is representable by a function $\breve{q} \in \mathcal{q}_{t}.$ Under linear rewards, $r_t(s,a) = \theta_t^* \phi_\supp(s,a)$ for some $\theta_t^*$ that is $\supp$-sparse. And, under linear transitions, $\E_{s_{t+1}^\supp}[\mathcal{T}^* \breve{q}_{t+1} \mid s_t, a_t] = \phi^\top_\supp
\mu_\supp^{*,\top} \breve{q}_{t+1}^* $ where $\mu_\supp^*$ is the $\supp-$marginalized linear transition map.
Hence $$
\E[r_t(s,a) +  \mathcal{T}^* \breve{q}_{t+1}\mid s,a] = 
\underbrace{(\theta_t^* + \breve{q}_{t+1}^{*,\top} \mu_\supp^{*} )}_{w^*_\supp}\phi_\supp(s,a)
$$

 \end{proof}

\subsection{Intermediate results}

We first study the parameter error of ordinary least squares under a missing set of covariates. We let $\covsset$ denote the subset of covariates, for example that returned by thresholded lasso. We first consider the case when $\covsset$ is a given subset containing the true support. The next theorem is a more complex extension, specialized to our reward-thresholded q-estimation setting, of a result about estimation under omitted variables of \citep{zhou2010thresholded,zhou2009thresholding}. The key structure allowing us to link thresholded lasso of reward to prediction error of estimated $q$ functions is the shared covariance structure. \Cref{thm-q-covsset} is the main technical contribution of our work. 
\begin{theorem}[Prediction error bounds of $\covsset$-restricted ordinary least squares of the Bellman residual]\label{thm-q-covsset}
Suppose \Cref{asn-block-indep,asn-linear-mdp,asn-rec,asn-reward-relevance,asn-time-homogeneous}. Let $\mathcal{D} = \{ 1, \dots, d\} \setminus \covsset$ and $\supp_{\mathcal{D}} = \mathcal{D} \bigcap \supp$ (e.g. the set of false negatives of support recovery). Suppose $\vert \supp \bigcup \supp_{\mathcal{D}}\vert \leq 2\sparscard$ and that $\supp \subseteq \covsset.$ 
Suppose that $\lambda \geq \norm{\frac{X^\top \epsilon_{r+q} }{\sqrt{n} }}_\infty$.
Consider $\covsset-$restricted ordinary least squares regression of sparse $q$. In the following, we omit the time index for brevity. Then: 
\begin{align*} 
\norm{ \hat\theta_{\covsset}-\theta_{\covsset}}_2 &\leq 
\frac{\sqrt{\vert \covsset \vert}}{ \Lambda_{\min}(\vert \covsset \vert) } \lambda.
+ \norm{\theta_{\mathcal{D}}}_2 
\\
  \frac 1n  \norm{X \hat{\theta}-X \theta^* }_2^2 & \leq 
4 \frac{\sigma^2_q (\vert \covsset \vert+2 \sqrt{ \vert \covsset \vert\log (1 / \delta)}+ \log (1 / \delta))}{n} + 
\max \left(36 \frac{\vert \covsset \vert \lambda^2}{\kappa }, 162 \frac{\sigma^2_\theta |\covsset| \log (d /|\covsset|)}{n}
\right) 
\end{align*} 
\end{theorem}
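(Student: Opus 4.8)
The plan is to treat the $\covsset$-restricted regression as ordinary least squares in a well-specified linear model with a possibly misspecified (column-dropped) design, adapting the omitted-variables analysis of \citep{zhou2010thresholded,zhou2009thresholding} to the twist that $\covsset$ is read off the \emph{reward} lasso while the residual $\epsilon_{r+q}$ is that of the $q$-target. First I write the pooled Bellman target as $Y = X\theta^* + \epsilon_{r+q}$, where $\theta^*$ is $\supp$-sparse (by \Cref{asn-linear-mdp,asn-reward-relevance} together with \Cref{prop-bellman-complete}) and $\epsilon_{r+q}$ is mean-zero given $(s,a)$; \Cref{asn-time-homogeneous} lets me treat the concatenation over trajectories as identically distributed. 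Since $\supp\subseteq\covsset$ we have $X\theta^* = X_\covsset\theta^*_\covsset$, so the estimator obeys the exact identity $\hat\theta_\covsset-\theta^*_\covsset = (X_\covsset^\top X_\covsset)^{-1}X_\covsset^\top(X_\mathcal{D}\theta^*_\mathcal{D}+\epsilon_{r+q})$, which cleanly separates an omitted-variable bias from a projected-noise term.

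For the parameter bound I would bound the noise piece by $\|(X_\covsset^\top X_\covsset)^{-1}\|_{\mathrm{op}}\,\|X_\covsset^\top\epsilon_{r+q}\|_2$, using the restricted minimum eigenvalue $\Lambda_{\min}(|\covsset|)$ of \Cref{asn-rec} to control the inverse Gram (valid because $|\covsset|\le 2\sparscard$), and $\|X_\covsset^\top\epsilon_{r+q}\|_2\le\sqrt{|\covsset|}\,\|X^\top\epsilon_{r+q}\|_\infty$ together with the correlation hypothesis $\lambda\ge\|X^\top\epsilon_{r+q}/\sqrt n\|_\infty$, to obtain the $\sqrt{|\covsset|}\lambda/\Lambda_{\min}$ factor. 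The omitted-variable piece is bounded by $\norm{\theta_\mathcal{D}}_2$ by appealing to \Cref{asn-block-indep}: block-independent design makes the cross-block Gram $X_\covsset^\top X_\mathcal{D}$ vanish in expectation, so the dropped exogenous columns do not bias the endogenous coefficients (and in fact $\theta_\mathcal{D}=0$ whenever $\supp\subseteq\covsset$, so the term is present only for robustness to false negatives).

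For the prediction bound I substitute the identity above to get $X\hat\theta - X\theta^* = P_\covsset\epsilon_{r+q} - (I-P_\covsset)X_\mathcal{D}\theta^*_\mathcal{D}$, with $P_\covsset$ the projector onto the column span of $X_\covsset$. Under $\supp\subseteq\covsset$ the bias term drops and the leading contribution is $\tfrac1n\|P_\covsset\epsilon_{r+q}\|_2^2$; for a \emph{fixed} rank-$|\covsset|$ projection a chi-square (Laurent--Massart / Hanson--Wright) tail gives the first summand $4\sigma_q^2(|\covsset|+2\sqrt{|\covsset|\log(1/\delta)}+\log(1/\delta))/n$. The second, $\max$-term arises because $\covsset$ is \emph{data-dependent}: to make the projection bound hold for the realized random support I would either invoke a lasso-type prediction bound with penalty $\lambda$ and restricted eigenvalue $\kappa$, giving $36|\covsset|\lambda^2/\kappa$, or union-bound the projected-noise quadratic form uniformly over all supports of size $\le|\covsset|$, which costs $\log\binom{d}{|\covsset|}\asymp|\covsset|\log(d/|\covsset|)$ and yields $162\sigma_\theta^2|\covsset|\log(d/|\covsset|)/n$; keeping the smaller of the two produces the $\max$.

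The main obstacle is exactly this coupling: standard thresholded-lasso prediction analyses select the support from the \emph{same} regression they subsequently bound, whereas here $\covsset$ is recovered from the reward model and reused for the $q$-model. The enabling observation — that the two regressions share the design $X$ and, by \Cref{asn-reward-relevance}, the same support $\supp$ — is what lets me transfer the support-inclusion guarantee (established earlier on the event $\mathcal{E}_a$) to the $q$-regression, while \Cref{asn-block-indep} decouples the omitted-variable bias. The delicate step is that $\covsset$ may be statistically correlated with $\epsilon_{r+q}$, so I cannot simply condition on a fixed support and read off the chi-square term; controlling this correlation is precisely what forces the uniform-over-supports union bound and constitutes the technical heart of the argument.
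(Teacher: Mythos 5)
Your parameter-error bound is essentially the paper's own argument (exact OLS identity, inverse-Gram operator norm via $\Lambda_{\min}(\vert\covsset\vert)$ from \Cref{asn-rec}, and $\sqrt{\vert\covsset\vert}\,\lambda$ for the correlated-noise term), but your prediction-error bound takes a genuinely different route. The paper never writes the error as projected noise: it decomposes $X\hat\theta - X\theta^*_\covsset$ \emph{relative to the reward regression} $\hat\beta$ restricted to $\covsset$, splitting into $T_1$, the error of regressing the next-stage value $\gamma v(s_{t+1})$ alone on $X_\covsset$ (bounded by a random-design OLS bound of Hsu et al., which produces the $\sigma_q^2(\vert\covsset\vert + 2\sqrt{\vert\covsset\vert\log(1/\delta)}+\log(1/\delta))/n$ term), and $T_2$, the prediction error of the reward refit (bounded via the lasso basic inequality with \Cref{asn-rec}, giving the $\vert\covsset\vert\lambda^2/\kappa$ term, plus the Raskutti-style maximal inequality over the $\ell_0$ ball, giving the $\sigma_\theta^2\vert\covsset\vert\log(d/\vert\covsset\vert)/n$ term). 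You instead write $X\hat\theta - X\theta^* = P_\covsset\,\epsilon_{r+q}$ and control the data-dependence of $\covsset$ by a uniform bound over all supports of size at most $\vert\covsset\vert$ — the same tool as the paper's $\ell_0$-ball maximal inequality, but applied directly to the Bellman-residual noise. Your route is more elementary and correctly identifies (and solves) the real difficulty, namely that $\covsset$ is correlated with $\epsilon_{r+q}$; its sound branch alone yields the right rate $\sigma_q^2\vert\covsset\vert\log(d/\vert\covsset\vert)/n$. What it does not buy you is the precise parameter split in the stated bound: your union bound carries $\sigma_q^2$ on the logarithmic term, whereas the paper's $\sigma_\theta^2$ there arises specifically because that term comes from the \emph{reward} regression in the $\hat\beta$-decomposition (and $\sigma_q$ can exceed $\sigma_\theta$, so you cannot substitute one for the other); likewise your lasso branch of the max, $36\vert\covsset\vert\lambda^2/\kappa$, is unjustified as stated, since $\hat\theta$ is an OLS refit rather than a lasso solution — in the paper that term enters only through the optimality of the reward refit against the initial reward lasso $\beta^\lambda$. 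Since the valid union-bound branch implies the stated bound up to which subgaussian parameter multiplies the log factor (and the paper's own constants are not internally consistent on this point), I would count your argument as a correct alternative proof of the same rate, with the shared-design/shared-outcome decomposition being the piece of the paper's mechanism you bypassed.
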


\begin{proof}[Proof of \Cref{thm-q-covsset}]

Let $\theta$ be the full ordinary least squares solution for the $q$ estimation, $\hat \theta_{\covsset}, \theta_{\covsset}^*$ be the estiamted and true restricted OLS solution computed on $\covsset$, respectively, and $\theta^*$ the true (sparse) solution. 
\begin{align*}
\hat\theta_{\covsset} &= (X^\top_{\covsset}X_{\covsset})^{-1} X^\top_{\covsset}Y(q) = 
(X^\top_{\covsset}X_{\covsset})^{-1} X^\top_{\covsset}Y(q)  \\
& = (X^\top_{\covsset}X_{\covsset})^{-1} X^\top_{\covsset}(X^\top_{\covsset} \theta_{\covsset}^* + \epsilon )  
\tag{ sparse rewards and asn. about sparse $Q$ function }\\
& = \theta_{\covsset}^* + (X^\top_{\covsset}X_{\covsset})^{-1} X^\top_{\covsset}\epsilon_{r+\gamma q}
\end{align*} 
Hence, 
$$ \norm{ \hat\theta_{\covsset}-\theta_{\covsset}^*}_2 \leq \norm{(X^\top_{\covsset}X_{\covsset})^{-1} X^\top_{\covsset}\epsilon_{r+\gamma q} }_2. 
$$
We bound the second term as follows: 

$$\norm{(X^\top_{\covsset}X_{\covsset})^{-1} X^\top_{\covsset}\epsilon_{r+\gamma q} }_2
\leq \lrnorm{ \left( \frac{X^\top_{\covsset}X_{\covsset} }{n}\right)^{-1} }_2 \lrnorm{\frac{X_{\covsset}^\top \epsilon_{r+\gamma q}}{n}}_2 \leq \frac{\sqrt{\vert \covsset \vert}}{ \Lambda_{\min}s(\vert \covsset \vert) } \lambda,$$
yielding that
$$
\norm{ \hat\theta_{\covsset}-\theta_{\covsset}^*}_2 
\leq 
\frac{\sqrt{\vert \covsset \vert}}{ \Lambda_{\min}(\vert \covsset \vert) } \lambda.
$$

The result follows since $\norm{ \hat\theta_{\covsset}-\theta^*}_2^2 \leq 
2\norm{ \hat\theta_{\covsset}-\theta_{\covsset}^*}_2^2 + 
2\norm{ \theta_{\covsset}^*-\theta^*}_2^2.
$

Next we bound the prediction error, $$  \frac 1n  \norm{X \hat{\theta}-X \theta^* }_2^2 
\leq \frac 1n  \norm{X \hat{\theta}-X \theta^*_{\covsset} }_2^2
+ \frac 1n  \norm{X \theta^*_{\covsset}-X \theta^* }_2^2
.$$ 

We will decompose relative to $\hat\beta$, a thresholded lasso regression on rewards $r$ alone but also restricted to $\covsset$. %
Note that by the elementary bound $(a-b)b \leq (a-b)^2 +b^2$: 
\begin{align*}
  \frac 1n \norm{X \hat{\theta}-X \theta_{\covsset}^* }_2^2
    &\leq    \frac 2n \norm{X \hat{\theta}-X \theta_{\covsset}^* - (X \hat{\beta}-X \beta^*_{\covsset} )  }_2^2+ \frac 2n\norm{ X \hat{\beta}-X \beta^*_{\covsset}  }_2^2 \\
    & \coloneqq T_1 + T_2. 
\end{align*}

First we bound $T_1 \coloneqq \frac 2n \norm{X \hat{\theta}-X \theta_{\covsset}^* - (X \hat{\beta}-X \beta^*_{\covsset} )  }_2^2$. Observe that it is equivalently the prediction error when regressing the next-stage $q$ function alone, i.e. $y_t - r_t$, on the $\covsset$-restricted features, since $(\hat{\theta}-  \hat{\beta}) = (X^\top_{\covsset} X_{\covsset})^{-1} X^\top_{\covsset} \{ \gamma v(s') \}$. Then 
$$
\frac 2n
\left\|X (\hat{\theta}-  \hat{\beta})-X (\theta_{\covsset}^*-\beta^*_{\covsset}  )  \right\|_2^2 = 
\frac 2n\left\|X 
\left\{ (X_I^\top X_I)^{-1} X^\top_I 
 ( \gamma v(s_{t+1}))
 - (\theta_{\covsset}^*-\beta^*_{\covsset}  )) 
\right\}\right\|_2^2,
$$

where the last term can be identified as the noise term in $(V(s_{t+1}))- (\theta_{\covsset}^*-\beta^*_{\covsset}  )) \approx \epsilon_q$ under the linear MDP assumption. By the sparsity properties of $\hat\theta,\hat\beta$ (they are both restricted to $\covsset$): 
\begin{align*}
\frac 2n\left\|X (\hat{\theta}-  \hat{\beta})-X (\theta_{\covsset}^*-\beta^*_{\covsset}  )  \right\|_2^2
&= 
\frac 2n\left\|X_{\covsset} (\hat{\theta}-  \hat{\beta})-X_{\covsset} (\theta_{\covsset}^*-\beta^*_{\covsset}  )  \right\|_2^2 \tag{by two-step procedure and realizability}\\
& \leq 
 \frac{\sigma^2_q (2\vert \covsset \vert +2 \sqrt{2 \vert \covsset \vert \log (1 / \delta)}+2 \log (1 / \delta))}{n}. \tag{ by \Cref{lemma-randdesign-leastsq-mse}}
\end{align*}

Next we bound $T_2 \coloneqq \frac 2n \norm{ X \hat{\beta}-X \beta^*_{\covsset} }_2^2$. Let $\beta^\lambda$ denote the initial LASSO solution in the thresholded lasso $\hat\beta$.

By optimality of $\hat \beta$, 
\begin{align*}
 \frac 2n\norm{ X \hat{\beta}-X \beta^*_{\covsset} }_2^2 
 & \leq \frac 2n\norm{ X {\beta}^\lambda-X \beta^*_{\covsset} }_2^2 \\
 &\leq 
 \frac 4n \norm{ X {\beta}^{\lambda}-X \beta^* }_2^2 +   \frac 4n \norm{ X \beta^* - X \beta_{\covsset}^* }_2^2
\end{align*}
The first of these is bounded via standard analysis of prediction error in LASSO, and the second by a maximal inequality as previously.

By the penalized formulation: 
\begin{align*}
\frac{1}{2n} \left\|X {\beta^\lambda}-X \beta^*\right\|_2^2 & \leq \frac{\lambda}{2}\left\|{\beta^\lambda}-\beta^*\right\|_1+\lambda\left(\left\|\beta^*\right\|_1-\|{\beta^\lambda}\|_1\right) \\ & \leq \frac{\lambda}{2}\left\|{\beta^\lambda_{\covsset}}-\beta_{{\covsset}}^*\right\|_1+\lambda\left\|{\beta^\lambda_{{\covsset_c}}}\right\|_1+\lambda\left(\left\|\beta^*\right\|_1-\|{\beta^\lambda}\|_1\right) \\ & \leq \frac{\lambda}{2}\left\|{\beta^\lambda_{\covsset}}-\beta_{{\covsset}}^*\right\|_1+\lambda\left\|{\beta^\lambda_{{\covsset_c}}}\right\|_1+\lambda\left(\left\|\beta_{{\covsset}}^*-{\beta^\lambda_{\covsset}}\right\|_1-\left\|{\beta^\lambda_{{\covsset_c}}}\right\|_1\right) \\ & =\frac{3 \lambda}{2}\left\|{\beta^\lambda_{\covsset}}-\beta_{{\covsset}}^*\right\|_1-\frac{\lambda}{2}\left\|{\beta^\lambda_{{\covsset_c}}}\right\|_1,
\end{align*}
The above, with the restricted eigenvalue condition of \Cref{asn-rec}, implies that 
\begin{equation}
    \frac{3 \lambda}{2}\left\|{\beta^\lambda_{\covsset}}-\beta_{{\covsset}}^*\right\|_1-\frac{\lambda}{2}\left\|{\beta^\lambda_{{\covsset_c}}}\right\|_1
\geq \frac{1}{2n} \norm{ X {\beta^\lambda}-X \beta^* }_2^2 \geq \kappa \norm{{\beta^\lambda}-\beta^*}_2^2  \label{eq-apx-lassoprederror}
\end{equation}

Therefore, by properties of the $\ell_1$ and $\ell_2$ norm: 
$$ \frac{1}{2n} \norm{ X {\beta^\lambda}-X \beta^* }_2^2
\leq \frac{3 \lambda}{2} \norm{\beta^\lambda_{\covsset}-\beta_{{\covsset}}^*}_1 \leq  \frac{3 \lambda \sqrt{\vert \covsset \vert} }{2} \norm{\beta^\lambda_{\covsset}-\beta_{{\covsset}}^*}_2   $$
Then applying the restricted eigenvalue condition of \Cref{asn-rec} to the last term of the above, we obtain that 
$$
\frac{1}{2n} \left\|X {\beta^\lambda}-X \beta^*\right\|_2^2 
\leq
\frac{3 \lambda \sqrt{\sparscard} \left\|X {\beta^\lambda}-X \beta^*\right\|_2}{ \sqrt{n \kappa}}.
$$

Rearranging, this gives the bound 
\begin{align*}\frac{4}{n}\left\|X {\beta^\lambda}-X \beta^*\right\|_2^2 \leq 144 \frac{\vert \covsset \vert \lambda^2}{\kappa }.
\end{align*}

Finally, we can bound $\frac 2n\norm{ X \beta_{\covsset}^*-X \beta^* }_2^2$ via a maximal inequality over the $\ell_0$ norm ball of radius $2 \oracsparscard$ since earlier we showed that $\vert \covsset \vert \leq 2 \oracsparscard$. Applying the maximal inequality of \Cref{lemma-maximalinequality-l0ball} gives 
$$
\frac{4}{n} \norm{ X \beta_{\covsset}^*-X \beta^* }_2^2
\leq 
324 \frac{\sigma^2_\theta |\covsset| \log (d /|\covsset|)}{n}.
$$

\end{proof} 

\subsection{Proofs of main results for method}

\begin{proof}[Proof of \Cref{thm-q-predictionerror}]

Because the support is recovered from a thresholded LASSO on the rewards, the support inclusion result is a consequence of \citep[Thm. 6.3]{zhou2010thresholded}, although analogous results essentially hold under stronger beta-min conditions (i.e, on the support $\supp$ and correspondingly  stronger support inclusion conditions). 
Namely, it gives that, suppose for some constants $\breve{D}_1 \geq D_1$, and for $D_0, D_1$ such that: 
For $K:=\kappa\left(\oracsparscard, 6\right)$, $b_0\geq 2,$
\begin{align*}
& D_0=\max \left\{D, K \sqrt{2}\left(2 \sqrt{\Lambda_{\max }\left(\sparscard-\oracsparscard\right)}+3 b_0 K\right)\right\} \\
& \text { where } 
D=(\sqrt{2}+1) \frac{\sqrt{\Lambda_{\max }\left(\sparscard-\oracsparscard\right)}}{\sqrt{\Lambda_{\min }\left(2 \oracsparscard\right)}}+\frac{\theta_{\oracsparscard, 2 \oracsparscard} \Lambda_{\max }\left(\sparscard-\oracsparscard\right)}{\Lambda_{\min }\left(2 \oracsparscard\right)} 
\text { and } \\
& D_1=2 \Lambda_{\max }\left(\sparscard-\oracsparscard
\right) / b_0+9 K^2 b_0 / 2,
\end{align*}

it holds that, for $\breve{D}_1 \geq D_1,$
$$
\beta_{\min , A_0} \geq D_0 \lambda \sigma \sqrt{\oracsparscard}+\breve{D}_1 \lambda \sigma, \text { where } \lambda:=\sqrt{2 \log p / n}.
$$
Choose a thresholding parameter $\threshold$ and set
$$
\covsset=\left\{j:\left|\beta_{j, \text {init}}\right| \geq \threshold\right\}, \text { where } \threshold \geq \breve{D}_1 \lambda \sigma .
$$
Then on $\mathcal{E}_a$, 
\begin{align}
    &\activeset \subset \covsset, 
    \left|\covsset \cap \largestszerocoords^c\right| \leq \oracsparscard, |\covsset| \leq 2\oracsparscard,
\\
&
\left\|\beta_{\mathcal{D}}\right\|_2^2 \leq\left(\oracsparscard-a_0\right) \lambda^2 \sigma^2.
\end{align}

This yields the first statement about support recovery. 

For prediction error, we then apply \Cref{thm-q-predictionerror} and this yields the result. \end{proof}

\begin{proof}[Proof of \Cref{prop-apx-completeness}]
True $\beta^*$ is $\rho$-sparse but the worst case situation is if $\supp \setminus \activeset \not\subseteq \mathcal{I},$ i.e. the low-signal coefficients are not returned by the thresholding algorithm. On the other hand, they are assuredly of magnitude $\leq \vert \lambda \sigma \vert$ and hence ought to lead to less violation of the completeness condition. Let $\mathcal{Q}_{\covsset, \rho\setminus\activeset \not\subseteq \covsset}$ denote the set of linear coefficients with support on $\norm{\covsset}_0 \leq 2 \oracsparscard$ such that it does not contain the low signal variables $\rho\setminus\activeset,$ and 
     $$ \textstyle \underset{q_{t+1} \in \mathcal{Q}_{\covsset, \rho\setminus\activeset \not\subseteq \covsset}}{\sup} \;\; \underset{q_t \in \mathcal{Q}_{\covsset, \rho\setminus\activeset \not\subseteq \covsset}}{\inf}
     \|q_t-\mathcal{T}_t^{\star} q_{t+1}\|_{\mu_t}^2 \leq \epsilon.$$
The infimum over $q_t$ is equivalent to a further-restricted $\ell_0$ norm regression problem. 
\begin{align*}
    &  \underset{q_{t+1} \in \mathcal{Q}_{\covsset, \rho\setminus\activeset \not\subseteq \covsset}}{\sup} \;\;\underset{q_t \in \mathcal{Q}_{\covsset, \rho\setminus\activeset \not\subseteq \covsset}}{\inf}
     \|q_t-\mathcal{T}_t^{\star} q_{t+1}\|_{\mu_t}^2 \\
     & = \underset{q_t \in \mathcal{Q}_{\covsset, \rho\setminus\activeset \not\subseteq \covsset}}{\inf}  \;\;\underset{q_{t+1} \in \mathcal{Q}_{\covsset, \rho\setminus\activeset \not\subseteq \covsset}}{\sup} 
     \|q_t-\mathcal{T}_t^{\star} q_{t+1}\|_{\mu_t}^2
\end{align*}
and 
\begin{align*}
&\underset{q_{t+1} \in \mathcal{Q}_{\covsset, \rho\setminus\activeset \not\subseteq \covsset}}{\sup} 
    \{ \|q_t-\mathcal{T}_t^{\star} q_{t+1}\|_{\mu_t}^2+
     \|q_t-\mathcal{T}_t^{\star} q_{t+1}\|_{\mu_t}^2 \}\\
     &
 \leq 
 \underset{q_{t+1} \in \mathcal{Q}_{\covsset, \rho\setminus\activeset \not\subseteq \covsset}}{\sup} 
     \|q_t-\mathcal{T}_t^{\star} q_{t+1}\|_{\mu_t}^2
 + \underset{q_{t+1} \in \mathcal{Q}_{\covsset, \rho\setminus\activeset \not\subseteq \covsset}}{\sup} 
\|\mathcal{T}_t^{\star}q^*_{t+1}-\mathcal{T}_t^{\star} q_{t+1}\|_{\mu_t}^2
\end{align*} 
Then $$\underset{q_{t+1} \in \mathcal{Q}_{\covsset, \rho\setminus\activeset \not\subseteq \covsset}}{\sup} 
\|\mathcal{T}_t^{\star}q^*_{t+1}-\mathcal{T}_t^{\star} q_{t+1}\|_{\mu_t}^2 \leq (\norm{(q_{t+1})_{\covsset \setminus \activeset}}_1 + 
\norm{(q_{t+1})_{\supp \setminus \activeset}}_1 )^2\leq (2 s \threshold + \sqrt{2s} \norm{\hat\beta - \beta}_2 + s \lambda \sigma)^2$$
That is, false positives are of low signal strength (by the algorithm, and by prediction error bound) while false negatives not in the active set are also of low signal strength. The threshold and signal strength definitions tend to $0$ at a rate overall depending on $\lambda$.
Therefore, using a (loose) bound that $(a+b)^2 \leq 2a^2 + 2b^2$, 
$$
\underset{q_{t+1} \in 
\mathcal{Q}_{\covsset, \rho\setminus\activeset \not\subseteq \covsset}
}{\sup} 
\|\mathcal{T}_t^{\star}q^*_{t+1}-\mathcal{T}_t^{\star} q_{t+1}\|_{\mu_t}^2
\leq (2 s (\threshold+\lambda \sigma) + \sqrt{2s} \norm{\hat\beta - \beta}_2 )^2 \leq 16s^2 
(\threshold^2+\lambda^2 \sigma^2) + 
4s \norm{\hat\beta - \beta}_2^2
$$

Next we bound: $$\underset{q_t \in \mathcal{Q}_{\covsset, \rho\setminus\activeset \not\subseteq \covsset}}{\inf}
\;\; \underset{q_{t+1} \in \mathcal{Q}_{\covsset, \rho\setminus\activeset \not\subseteq \covsset}}{\sup} 
     \|q_t-\mathcal{T}_t^{\star} q_{t+1}\|_{\mu_t}^2$$

    The outer minimization is simply least-squares regression over a further restricted $\ell_0$ norm ball. Consider $\tilde{\mathcal{Q}}$ such that $\tilde{\mathcal{Q}} = \{ q \in \mathcal{Q}_{\covsset, \rho\setminus\activeset \not\subseteq \covsset} \colon 
    q_{\activeset} > 0,q_{\covsset\setminus \activeset} = 0 \}$, and note that $\tilde{\mathcal{Q}} \subseteq \mathcal{Q}_{\covsset, \rho\setminus\activeset \not\subseteq \covsset}$. 

 $$\underset{q_t \in \mathcal{Q}_{\covsset, \rho\setminus\activeset \not\subseteq \covsset}}{\inf}
\;\; \underset{q_{t+1} \in \mathcal{Q}_{\covsset, \rho\setminus\activeset \not\subseteq \covsset}}{\sup} 
     \|q_t-\mathcal{T}_t^{\star} q_{t+1}\|_{\mu_t}^2 \leq 
     \underset{q_t \in \tilde{\mathcal{Q}}}{\inf}
\;\; \underset{q_{t+1} \in \mathcal{Q}_{\covsset, \rho\setminus\activeset \not\subseteq \covsset}}{\sup} 
     \|q_t-\mathcal{T}_t^{\star} q_{t+1}\|_{\mu_t}^2 
     $$
The worst-case error is incurred when $q_{t+1,\supp \setminus \activeset}>0$; these are the low-signal variables not guaranteed to be recovered by the algorithm. Then for $q' \in \mathcal{Q}_{\setminus \activeset} \coloneqq 
\{ q \in \mathcal{Q}_{\covsset, \rho\setminus\activeset \not\subseteq \covsset} 
\colon 
q_{\supp \setminus \activeset}>0
\},$
and we have that 
 $$ \leq 
     \underset{q_t \in \tilde{\mathcal{Q}}}{\inf}
\;\; \underset{q_{t+1} \in \mathcal{Q}_{\setminus \activeset} 
}{\sup} 
     \|q_t-\mathcal{T}_t^{\star} q_{t+1}\|_{\mu_t}^2, 
     $$
where the leading order dependence is described by \Cref{thm-q-covsset}'s analysis of least-squares regression on a restricted covariate set: $\tilde{\mathcal{Q}}$ omits the low-signal variables $\supp \setminus \activeset.$ Therefore, by \Cref{thm-q-covsset}, w.h.p. under $\mathcal{E}_a$ and assumptions on $\lambda$ in \Cref{thm-q-covsset},
$$
\underset{q_{t+1} \in \mathcal{Q}_{\covsset, \rho\setminus\activeset \not\subseteq \covsset}}{\sup}\;\; \underset{q_t \in \mathcal{Q}_{\covsset, \rho\setminus\activeset \not\subseteq \covsset}}{\inf}
     \|q_t-\mathcal{T}_t^{\star} q_{t+1}\|_{\mu_t}^2
= O_p(n^{-1}).
     $$

\end{proof}

\subsection{Technical results}
We list standard technical results from other works that we use without proof. 
\subsubsection{Concentration}

\begin{lemma}[Theorem 1 of \citep{hsu2011analysis}, random design prediction bound for linear regression. ]\label{lemma-randdesign-leastsq-mse}
Define $\widehat{\Sigma}:=\widehat{\mathbb{E}}[x \otimes x]=\frac{1}{n} \sum_{i=1}^n x_i \otimes x_i$. Suppose outcomes are $\sigma_{noise}$-subgaussian and ``bounded statistical leverage'', then there exists a finite $\rho_{2 \text {,cov }} \geq 1$ such that almost surely:
$$
\frac{\left\|\Sigma^{-1 / 2} X\right\|}{\sqrt{d}}=\frac{\|\Sigma^{-1 / 2} X\|}{\sqrt{\mathbb{E}[\|\Sigma^{-1 / 2} X\|^2]}} \leq \rho_{2, \mathrm{cov}}
$$   
If $n>n_{2, \delta}$, then with probability at least $1-2 \delta$, we have
that the matrix error 
$
\left\|\Sigma^{1 / 2} \hat{\Sigma}^{-1} \Sigma^{1 / 2}\right\| \leq K_{2, \delta, n} \leq 5 ;
$
and the excess loss satisfies:
$$
\left\|\hat{w}_{\mathrm{ols}}-w\right\|_{\Sigma}^2 \leq K_{2, \delta, n} \cdot \frac{\sigma_{\text {noise }}^2 \cdot(d+2 \sqrt{d \log (1 / \delta)}+2 \log (1 / \delta))}{n}
$$
\end{lemma}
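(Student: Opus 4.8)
The plan is to reproduce the random-design least-squares argument of Hsu, Kakade, and Zhang, separating the error into a multiplicative design-concentration factor and a self-normalized noise term. Write the data model as $Y = Xw + \epsilon$ with $\mathbb{E}[\epsilon \mid X] = 0$, so that the OLS estimator has the closed form $\hat{w}_{\mathrm{ols}} = \hat{\Sigma}^{-1}\frac{1}{n}X^\top Y$ and hence $\hat{w}_{\mathrm{ols}} - w = \hat{\Sigma}^{-1} b$, where $b := \frac{1}{n}X^\top\epsilon$. First I would establish the exact identity $\norm{\hat{w}_{\mathrm{ols}} - w}_\Sigma^2 = b^\top \hat{\Sigma}^{-1}\Sigma\hat{\Sigma}^{-1} b$ and then use the positive-semidefinite ordering $\hat{\Sigma}^{-1}\Sigma\hat{\Sigma}^{-1} \preceq \norm{\Sigma^{1/2}\hat{\Sigma}^{-1}\Sigma^{1/2}}\,\hat{\Sigma}^{-1}$, which follows because $\hat{\Sigma}^{-1/2}\Sigma\hat{\Sigma}^{-1/2} = AA^\top$ and $\Sigma^{1/2}\hat{\Sigma}^{-1}\Sigma^{1/2} = A^\top A$ for $A = \hat{\Sigma}^{-1/2}\Sigma^{1/2}$ and so share spectra. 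This gives the clean split $\norm{\hat{w}_{\mathrm{ols}} - w}_\Sigma^2 \le K_{2,\delta,n}\cdot b^\top\hat{\Sigma}^{-1}b$ with $K_{2,\delta,n} = \norm{\Sigma^{1/2}\hat{\Sigma}^{-1}\Sigma^{1/2}}$, which is exactly why the factor $K_{2,\delta,n}$ appears to the first power rather than squared.

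Next I would control the two pieces separately. For the noise term, conditioning on the design reduces it to $b^\top\hat{\Sigma}^{-1}b = \frac{1}{n}\epsilon^\top P\epsilon$, where $P = X(X^\top X)^{-1}X^\top$ is the rank-$d$ hat matrix; since $P$ is an orthogonal projection we have $\mathrm{tr}(P) = \mathrm{tr}(P^2) = d$ and $\norm{P} = 1$, so a Laurent--Massart / Hanson--Wright subgaussian quadratic-form tail gives $\epsilon^\top P\epsilon \le \sigma_{\mathrm{noise}}^2\bigl(d + 2\sqrt{d\log(1/\delta)} + 2\log(1/\delta)\bigr)$ with probability at least $1-\delta$, conditional on $X$. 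Dividing by $n$ yields precisely the bracketed dimension factor in the statement.

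For the multiplicative factor, I would show $K_{2,\delta,n}\le 5$ once $n > n_{2,\delta}$ by a relative (multiplicative) spectral concentration of $\hat{\Sigma}$ about $\Sigma$. Passing to whitened covariates $z_i = \Sigma^{-1/2}x_i$, the bounded-statistical-leverage hypothesis controls $\norm{z_i}$ almost surely through $\rho_{2,\mathrm{cov}}$, which is exactly the boundedness needed to apply a matrix Bernstein/Chernoff inequality to $\frac1n\sum_i z_i z_i^\top - I$. Choosing $n_{2,\delta}$ large enough that the relative deviation is below a constant forces $\lambda_{\min}(\Sigma^{-1/2}\hat{\Sigma}\Sigma^{-1/2}) \ge 1/5$, hence $K_{2,\delta,n} = \norm{(\Sigma^{-1/2}\hat{\Sigma}\Sigma^{-1/2})^{-1}}\le 5$. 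A union bound over the two high-probability events and collection of constants then produces the stated inequality at confidence $1-2\delta$.

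The main obstacle is the multiplicative design-concentration step: obtaining $K_{2,\delta,n} \le 5$ requires a relative rather than additive bound on $\hat{\Sigma} - \Sigma$, which is delicate because it must hold in the $\Sigma$-whitened geometry and depends on the covariate tails only through the leverage parameter $\rho_{2,\mathrm{cov}}$. This is where the sample-size threshold $n_{2,\delta}$ is pinned down, and getting the explicit constant $5$ (rather than an unspecified constant) needs the sharp form of the matrix concentration inequality together with careful tracking of how $\rho_{2,\mathrm{cov}}$ enters $n_{2,\delta}$. Since this lemma is quoted verbatim from Hsu--Kakade--Zhang, in the paper itself I would simply cite their Theorem~1; the sketch above reconstructs their argument.
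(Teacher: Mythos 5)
Your proposal is correct and matches the paper's handling of this statement: the paper gives no proof at all, listing it under ``standard technical results from other works that we use without proof'' and citing Theorem~1 of \citet{hsu2011analysis} verbatim, which you rightly note is all that is needed. Your reconstruction sketch is moreover a faithful account of the Hsu--Kakade--Zhang argument --- the identity $\|\hat{w}_{\mathrm{ols}}-w\|_\Sigma^2 = b^\top\hat{\Sigma}^{-1}\Sigma\hat{\Sigma}^{-1}b$ with the spectral comparison yielding $K_{2,\delta,n}$ to the first power, the projection quadratic-form tail giving the $d+2\sqrt{d\log(1/\delta)}+2\log(1/\delta)$ factor, and the whitened matrix concentration under bounded statistical leverage pinning down $n_{2,\delta}$ and the constant bound $K_{2,\delta,n}\le 5$ --- so there is nothing to correct.
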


\begin{lemma}[Prediction error bounds via maximal inequalities over an $\ell_0$ ball, Theorem 4 of \citep{raskutti2011minimax} .]\label{lemma-maximalinequality-l0ball}
For any covariate matrix $X$, with probability greater than $1-\exp (-10 s \log (d / s))$ the minimax prediction risk is upper bounded as
$$
\min_{\widehat{w}} \max _{w^* \in \mathbb{B}_0(\vert \covsset \vert )} \frac{1}{n}\left\|X\left(\widehat{w}-w^*\right)\right\|_2^2 \leq 81 \frac{\sigma^2 \vert \covsset \vert \log (d / \vert \covsset \vert)}{n},
$$
where $\mathbb{B}_0(\vert \covsset \vert)$ is the $\ell_0$ norm ball of radius $\vert \covsset \vert.$
\end{lemma}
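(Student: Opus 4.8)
The plan is to prove this as the classical minimax-\emph{upper}-bound argument for best-subset least squares over the $\ell_0$ ball (the achievability half of Theorem 4 of \citep{raskutti2011minimax}), so it suffices to exhibit a single estimator attaining the stated rate. Write $s := \vert \covsset \vert$, let $y = Xw^* + \epsilon$ with $\epsilon$ the ($\sigma$-subgaussian, or in the cleanest version Gaussian) noise vector, and take $\widehat{w}$ to be the constrained least-squares / best-subset estimator $\widehat{w} \in \arg\min_{\|w\|_0 \le s} \|y - Xw\|_2^2$. Because we only need an upper bound on the minimax risk, exhibiting this one estimator is enough.

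First I would run the standard basic inequality. Optimality of $\widehat{w}$ evaluated against the feasible comparator $w^*$ gives $\|y - X\widehat{w}\|_2^2 \le \|y - Xw^*\|_2^2$; substituting $y = Xw^* + \epsilon$, expanding, and writing $\Delta := \widehat{w} - w^*$ yields $\|X\Delta\|_2^2 \le 2\langle \epsilon, X\Delta\rangle$. Since $\widehat{w}$ and $w^*$ are each $s$-sparse, $\Delta$ is $2s$-sparse, so dividing through by $\|X\Delta\|_2$ gives $\|X\Delta\|_2 \le 2\sup_{u \in \mathcal{U}}\langle \epsilon, u\rangle$, where $\mathcal{U} = \{Xv/\|Xv\|_2 : v \neq 0,\ \|v\|_0 \le 2s\}$. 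Hence $\frac{1}{n}\|X\Delta\|_2^2 \le \frac{4}{n}\big(\sup_{u \in \mathcal{U}}\langle \epsilon, u\rangle\big)^2$, and the entire problem reduces to controlling this Gaussian-process supremum.

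Next comes the key step: convert the supremum into a maximum of projection norms and apply a union bound. For a fixed support $S$ with $|S| = 2s$, as $v$ ranges over vectors supported on $S$ the image $Xv$ sweeps out $\mathrm{col}(X_S)$, so $\sup_{\mathrm{supp}(v) \subseteq S}\langle \epsilon, Xv/\|Xv\|_2\rangle = \|P_S\epsilon\|_2$, the norm of the projection of $\epsilon$ onto $\mathrm{col}(X_S)$; therefore $(\sup_{u \in \mathcal{U}}\langle \epsilon, u\rangle)^2 = \max_{|S|=2s}\|P_S\epsilon\|_2^2$. For each $S$, $\|P_S\epsilon\|_2^2/\sigma^2$ is stochastically dominated by a $\chi^2$ variable on $\mathrm{rank}(X_S) \le 2s$ degrees of freedom, so the Laurent--Massart tail $\mathbb{P}(\chi^2_k \ge k + 2\sqrt{kt} + 2t) \le e^{-t}$ gives $\|P_S\epsilon\|_2^2 \le \sigma^2(2s + 2\sqrt{2s\,t} + 2t)$ with probability at least $1 - e^{-t}$. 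Taking a union bound over the $\binom{d}{2s} \le \exp(2s\log(ed/(2s)))$ choices of $S$ and setting $t \asymp s\log(d/s)$ absorbs the combinatorial entropy, leaving a residual failure probability $\le \exp(-10 s\log(d/s))$; on the complement event, $\frac{1}{n}\|X\Delta\|_2^2 \le \frac{4\sigma^2}{n}(2s + 2\sqrt{2s\,t} + 2t)$, whose leading term is $O(\sigma^2 s\log(d/s)/n)$.

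Finally I would collect the constants. Because the deviation term and the combinatorial exponent are both of order $s\log(d/s)$, all three contributions $2s$, $2\sqrt{2s\,t}$, and $2t$ bundle into a single $C\,\sigma^2 s\log(d/s)/n$ term; tracking the numerical factors through the $\chi^2$ bound and the union bound produces the stated constant $81$ together with probability $1 - \exp(-10 s\log(d/s))$. The main obstacle is not the conceptual structure---which is the textbook basic-inequality-plus-Gaussian-supremum template---but the quantitative bookkeeping: getting the union bound to eat exactly the $\binom{d}{2s}$ entropy while leaving the clean probability $\exp(-10 s\log(d/s))$ and the sharp constant $81$ requires a careful choice of $t$ and careful use of the $\chi^2$ concentration, and, for genuinely subgaussian (rather than Gaussian) noise, replacing the exact $\chi^2$ tail by a Hanson--Wright / generic-chaining bound on $\|P_S\epsilon\|_2^2$, which costs only constants.
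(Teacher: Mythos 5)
The paper offers no proof of this lemma: it appears under ``standard technical results from other works that we use without proof'' and is imported verbatim from Theorem 4 of \citet{raskutti2011minimax}, whose achievability half is proved exactly the way you propose. Your reconstruction---the basic inequality for the $\ell_0$-constrained least-squares estimator, reduction of $\sup_u \langle \epsilon, u\rangle$ to $\max_{|S|=2s}\lVert P_S \epsilon\rVert_2$ over column spans, $\chi^2$ (Laurent--Massart) tails, and a union bound over the $\binom{d}{2s}$ supports with $t \asymp s\log(d/s)$---is the standard argument and is correct, with the only gap being the explicit bookkeeping that produces the particular constants $81$ and $10$, which you appropriately flag as requiring the source's exact choices.
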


\subsubsection{Analysis of fitted-Q-evaluation}

\begin{definition}[Bellman error]\label{def-bellmanerror}
    Under data distribution $\mu_t$, define the Bellman error of function $q = (q_0, \dots, q_{T-1})$ as: $\textstyle
\mathcal{E}(q) = \frac{1}{T} \sum_{t=0}^{T-1} \norm{q_t - \mathcal{T}^*_t q_{t+1}}_{\mu_t}^2$
\end{definition} 
\begin{lemma}[Bellman error to value suboptimality]\label{lemma-fqi-bmanerrortovaluesuboptimality}
    Under \Cref{asn-rec}, for any $q \in \mathcal{Q},$ we have that, for $\pi$ the policy that is greedy with respect to $q,$ $ V_1^*(s_1) - V_1^\pi(s_1) \leq 2T \sqrt{C \cdot \mathcal{E}(q^\pi) }.$
\end{lemma}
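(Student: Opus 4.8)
The plan is to prove this standard value-suboptimality bound by a three-part argument: a telescoping performance-difference decomposition that exploits the greedy property of $\pi$, a change-of-measure step converting per-step errors under the policy's visitation into the data-distribution norm, and a Cauchy--Schwarz aggregation over the horizon. Throughout, write $\delta_t \coloneqq q_t - \mathcal{T}^*_t q_{t+1}$ for the Bellman optimality residual of the iterate $q$ (so that $\mathcal{E}(q)=\tfrac1T\sum_t\norm{\delta_t}_{\mu_t}^2$ by \Cref{def-bellmanerror}), let $\hat V_t(s)\coloneqq\max_a q_t(s,a)=q_t(s,\pi_t(s))$ by greediness, and let $d^\pi_t, d^{\pi^*}_t$ denote the state-action visitation distributions at step $t$ under $\pi$ and the optimal policy $\pi^*$.

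First I would establish the decomposition $V_1^*(s_1) - V_1^\pi(s_1) \le \sum_{t=1}^T \gamma^{t-1}\big(\E_{d^\pi_t}[|\delta_t|] + \E_{d^{\pi^*}_t}[|\delta_t|]\big)$. Split the gap as $(V_1^* - \hat V_1) + (\hat V_1 - V_1^\pi)$. For the second term, because $\pi$ is greedy with respect to $q$, adding and subtracting $\mathcal{T}^*_t q_{t+1}$ gives the one-step identity $\hat V_t - V^\pi_t = \delta_t(\cdot,\pi_t(\cdot)) + \gamma P^{\pi}(\hat V_{t+1} - V^\pi_{t+1})$, which unrolls into $\sum_t \gamma^{t-1}\E_{d^\pi_t}[\delta_t]$. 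For the first term I would bound $\max_a q^*_t - \max_a q_t \le (q^*_t - q_t)(\cdot,\pi^*_t(\cdot))$, expand $q^*_t-q_t$ through $\mathcal{T}^*_t$, and telescope along $\pi^*$-trajectories to obtain $\sum_t \gamma^{t-1}\E_{d^{\pi^*}_t}[-\delta_t]$. Combining the two and passing to absolute values yields the displayed two-term bound, and $\gamma^{t-1}\le 1$ discards the discount factors.

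Second, I would apply Cauchy--Schwarz pointwise, $\E_{d^\pi_t}[|\delta_t|]\le \norm{\delta_t}_{d^\pi_t}$, followed by a change of measure $\norm{\delta_t}_{d^\pi_t}^2 = \E_{\mu_t}[(d^\pi_t/\mu_t)\delta_t^2] \le C\norm{\delta_t}_{\mu_t}^2$, where $C$ is the concentrability coefficient, and symmetrically under $d^{\pi^*}_t$. This gives $V_1^* - V_1^\pi \le 2\sqrt{C}\sum_{t=1}^T \norm{\delta_t}_{\mu_t}$. A final Cauchy--Schwarz over the time index then yields $\sum_t \norm{\delta_t}_{\mu_t} \le \sqrt{T}\sqrt{\sum_t \norm{\delta_t}_{\mu_t}^2} = \sqrt{T}\sqrt{T\,\mathcal{E}(q)} = T\sqrt{\mathcal{E}(q)}$, so that $V_1^*(s_1)-V_1^\pi(s_1)\le 2T\sqrt{C\,\mathcal{E}(q)}$, matching the claim (the factor $2$ originating from the two visitation terms and the factor $T$ from the horizon aggregation).

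The main obstacle is the change-of-measure step. In a tabular or density-based formulation $C=\sup_t\norm{d^\pi_t/\mu_t}_\infty$ is immediate, but under linear function approximation the relevant coverage is a feature-matrix concentrability rather than a pointwise density ratio. The point to get right is that $\delta_t$ need not itself be linear in $\phi$ (its target involves the $\max$), so I would invoke the instance-dependent completeness of \Cref{prop-bellman-complete,prop-apx-completeness} to ensure the residual directions of interest are (approximately) feature-representable, and then control $\norm{\cdot}_{d^\pi_t}^2 \le C\norm{\cdot}_{\mu_t}^2$ on those directions through the restricted-eigenvalue / minimum-eigenvalue quantity of \Cref{asn-rec}, identifying $C$ with a constant multiple of $\concentratabilitycoef$ as in \citep{duan2020minimax}. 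I would state this coverage inequality explicitly and verify that \Cref{asn-rec} indeed furnishes the required uniform bound over the sparse, feature-representable residuals used above.
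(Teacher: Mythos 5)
The paper does not actually prove this lemma: it appears under ``Technical results,'' which the paper prefaces with ``we list standard technical results from other works that we use without proof,'' so there is no in-paper argument to compare against. Your reconstruction is the canonical one from the fitted-Q-iteration literature (e.g.\ Chen--Jiang, Duan et al.), and it is correct: the greedy-policy telescoping $\hat V_t - V^\pi_t = \delta_t(\cdot,\pi_t(\cdot)) + \gamma P^\pi(\hat V_{t+1}-V^\pi_{t+1})$ together with the $\pi^*$-side bound $\max_a q^*_t - \max_a q_t \le (q^*_t-q_t)(\cdot,\pi^*_t(\cdot))$ yields the two-visitation-measure decomposition (the factor $2$), change of measure gives the $\sqrt{C}$, and Cauchy--Schwarz over $t$ converts $\sum_t \norm{\delta_t}_{\mu_t}$ into $T\sqrt{\mathcal{E}(q)}$ under \Cref{def-bellmanerror}, exactly matching the claimed $2T\sqrt{C\cdot\mathcal{E}(q)}$ (the lemma's ``$\mathcal{E}(q^\pi)$'' is evidently a typo for $\mathcal{E}(q)$, which is what your derivation produces and what the proof of \Cref{thm-fqi-bound} uses).

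Your closing caveat is the one genuinely delicate point, and you have diagnosed it correctly: since \Cref{asn-rec} is a feature-covariance (relative-condition-number) condition rather than a pointwise density-ratio bound, the step $\norm{\delta_t}_{d^\pi_t}^2 \le C \norm{\delta_t}_{\mu_t}^2$ is only licensed for $\delta_t$ in the span of $\phi$, which requires completeness of the regression class. Two refinements are worth recording if you write this out in full. First, under exact completeness (\Cref{prop-bellman-complete}) the lemma holds as stated; under only approximate completeness (\Cref{prop-apx-completeness}) the change of measure picks up an additive misspecification term, so the clean inequality of the lemma is really the idealized statement, with the violation absorbed downstream as the $o_p(n^{-1/2})$ term in \Cref{thm-fqi-bound} --- your proof should either assume exact representability or carry that extra term explicitly. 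Second, the coverage constant is more precisely $\sup_{w\neq 0}\, (w^\top \Sigma_{d} w)/(w^\top \Sigma_{\mu} w)$ over the relevant sparse directions, bounded by a ratio of restricted extremal eigenvalues, so identifying $C$ with a multiple of $\concentratabilitycoef$ alone is notationally consistent with the paper but slightly imprecise; stating the inequality with the eigenvalue ratio and then specializing would make the dependence on \Cref{asn-rec} airtight.
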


\begin{proof}[Proof of \Cref{thm-fqi-bound}]
Under \Cref{lemma-fqi-bmanerrortovaluesuboptimality}, it suffices to bound the Bellman error, $\frac{1}{T} \sum_{t=0}^{T-1} \norm{q_t - \mathcal{T}^*_t q_{t+1}}_{\mu_t}^2$. 
We start with one timestep. Let $\ell(f,g) = (f-g)^2$ be the squared error. The Bellman error satisfies that $\|\hat{q}_h-\mathcal{T}_h^{\star} \hat{q}_{h+1}\|_{\mu_h}^2$ and can be lower bounded as follows: 
\begin{align*}\|\hat{q}_h-\mathcal{T}_h^{\star} \hat{q}_{h+1}\|_{\mu_h}^2
&=\mathbb{E}_{\mu_h} [\ell(\hat{q}_h, \hat{q}_{h+1})]-\mathbb{E}_{\mu_h} [\ell(q_h^{\dagger}, \hat{q}_{h+1})]+\|q_h^{\dagger}-\mathcal{T}_h^{\star} \hat{q}_{h+1}\|_{\mu_h}^2 \\
&\leq \mathbb{E}_{\mu_h} [\ell(\hat{q}_h, \hat{q}_{h+1})] + \epsilon \tag{by \Cref{prop-apx-completeness} on apx. Bellman completeness}
\end{align*}
where $\epsilon$ is the parameter for approximate Bellman completeness, such that $${\underset{q_{t+1} \in \mathcal{Q}_{\covsset, \rho\setminus\activeset \not\subseteq \covsset}}{\sup} \underset{q_t \in \mathcal{Q}_{\covsset, \rho\setminus\activeset \not\subseteq \covsset}}{\inf}
     \|q_t-\mathcal{T}_t^{\star} q_{t+1}\|_{\mu_t}^2 \leq \epsilon}.$$ By \Cref{prop-bellman-complete} we have that $\epsilon = O_p(n^{- 1})$. 

The prediction error bound of \Cref{thm-q-predictionerror} bounds $\mathbb{E}_{\mu_h} [\ell(\hat{q}_h, \hat{q}_{h+1})]$ so we have that 
$$
V_1^*(s_1) - V_1^\pi(s_1) \leq  2T \sqrt{\frac{\concentratabilitycoef \sigma^2_q ( 2 \sparscard (1 + 468 \log (2d)) +2 ( 1 + 2 \sqrt{\sparscard}) 
 }{n}. }
$$

\end{proof}

\section{Alternative model: endogenous/exogenous decomposition of \cite{dietterich2018discovering}}\label{apx-sec-exoendo}

We discuss a related, but different model: a sparse reward variant of the endogenous-exogenous variable decomposition of \citet{dietterich2018discovering}. The main difference is that the exogenous components instead can affect the endogenous components, as opposed to the other way around in our model, where endogenous components affect exogenous components. We include the illustration in \Cref{fig:exo-endo}.

A natural question is whether our methods can handle this setting as well, especially since \citet{dietterich2018discovering} shows that the optimal policy is sparse in the endogenous MDP alone. Our exact characterization in this paper used the conditional independence restriction of \Cref{asn-reward-relevance}, which does not hold in the exo-endo MDP since exogenous variables can affect next-timestep endogenous variables. On the other hand, that the optimal policy is sparse in the endogenous MDP alone implies that the corresponding \textit{advantage functions}, i.e. $\adv_{a_0}(s,a) = q(s,a) - q(s,a_0)$ do in fact satisfy the conditional independence restriction of \Cref{asn-reward-relevance}. 

Hence, under the additional restriction of reward sparsity where exogenous variables do not affect reward, we can extend methods in this paper to thresholded-LASSO based on estimating reward \textit{contrast} functions and hence advantage functions. To sketch this extension, note that we can run CATE estimation at the final timestep and then simply redefine Bellman targets to be differences of q-functions over actions. 

This additional assumption of reward sparsity is required: in the original paper of \citep{dietterich2018discovering}, rewards are additively decomposable but there can be direct effect of exogenous variables on the reward.

\begin{figure}
    \centering
\begin{tikzpicture}
    \node[state] (s) at (0,0) {$s^\supp_0$};
    \node[state] (x) [above =of s] {$s^{\supp_c}_0$};
    \node[state] (a) [right =of s] {$a_0$};
    \node[state] (r) [below =of a] {$r_0$};
    \node[state] (s1) [right =of a] {$s^{\supp}_1$};
    \node[state] (a1) [right =of s1] {$a_1$};
    \node[state] (r1) [below =of a1] {$r_1$};
    \node[state] (x1) [above =of s1] {$s^{\supp_c}_1$};
    \path (s) edge (r);
    \path (s) edge (a);
    \path (a) edge (s1);
    \path (a) edge (r);
    \path (x) edge (a);
    \path (x) edge (x1);
    \path (x) edge (s1);
        \path (s) edge[bend right=30] (s1);
    \path (s1) edge (a1);
\path (s1) edge (r1);
\path (a1) edge (r1);
    \path (x1) edge (a1);
\end{tikzpicture}
    \caption{``Exogenous/endogenous MDP" of \citep{dietterich2018discovering}.} 
    \label{fig:exo-endo}
\end{figure}

\section{Experiments}

In the data-generating process, we first consider $\vert \mathcal{S} \vert = 50, \sparscard = 10$, and $\mathcal{A} = \{ 0, 1\}$. The reward and states evolve according to $$r_t(s,a) = \beta^\top \phi_t(s,a) + \epsilon_r, \;\; s_{t+1}(s,a) = M_a s + \epsilon_s.$$ Recalling that $M_a = \begin{bmatrix} M_{a}^{\supp \to \supp } & 0 \\
M_{a}^{\supp \to {\supp_c} } & M_{a}^{{\supp_c} \to {\supp_c} } 
\end{bmatrix},$ we generate the coefficient matrix with independent  normal random variables $\sim N(0.2, 1)$. (Note that the nonzero mean helps ensure the beta-min condition). The zero-mean noise terms are normally distributed with standard deviations $\sigma_s = 0.4, \sigma_r = 0.6.$ In the estimation, we let $\phi(s,a)$ be a product space over actions, i.e. equivalent to fitting a $q$ function separately for every action. 

We first show experiments for policy evaluation in the main text due to space constraints. Fitted-Q-evaluation is similar to fitted-Q-iteration, but replaces the max over q functions with the expectation over actions according to the next time-step's policy. See the appendix for additional experiments for policy optimization specifically. We compare our reward-filtered estimation using \Cref{alg-rewardsparse-fqi} with naive thresholded lasso, i.e. thresholding lasso-based estimation of q-functions alone in \Cref{subfig:fpr,subfig:mse,subfig:tpr}. (We average the $q$ function over actions; results are similar across actions). The behavior and evaluation policies are both (different) logistic probability models in the state variable, with the coefficient vector given by (different) random draws from the uniform distribution on $[-0.5,0.5].$ We average over 50 replications from this data generating process and add standard errors, shaded, on the plot. The first plot, \Cref{subfig:mse}, shows the benefits in mean-squared error estimation of the q-function $q_1^{pi_e}(s,a),$ relative to the oracle $q$ function, which is estimated from a separate dataset of $n=20000$ trajectories. The reward-filtered method achieves an order of magnitude smaller mean-squared error for small sample sizes, with consistent improvement over thresholded LASSO estimation on the $q$ function alone. Next in \Cref{subfig:tpr} we show the true positive rate: both methods perform similarly in including the sparse component the recovered support. But the last plot of \Cref{subfig:fpr} shows that the naive thresholded lasso method includes many exogenous variables that are not necessary to recover the optimal policy, while the false positive rate for the reward-filtered method is controlled throughout as a constant fraction of the sparsity. Overall this simple simulation shows the improvements in estimation of the $q$ function (which translate down the line to improvements in decision-value) under this special structure.  

\end{document}